\renewcommand\footnotetextcopyrightpermission[1]{} 
\newcommand{\args}{\ensuremath{\mathcal{A}}}
\newcommand{\attacks}{\ensuremath{\mathcal{R}}}
\newcommand{\supports}{\ensuremath{\mathcal{S}}}
\newcommand{\probDists}{\ensuremath{\mathcal{P}_\args}}
\newcommand{\probLabs}{\ensuremath{\mathcal{L}_\args}}
\newcommand{\attacker}{\ensuremath{\mathrm{Att}}}
\newcommand{\supporter}{\ensuremath{\mathrm{Sup}}}
\newcommand{\constraints}{\ensuremath{\mathcal{C}}}
\newcommand{\indicator}{\ensuremath{\operatorname{1}}}
\newcommand{\valuations}{\ensuremath{\mathcal{V}}}
\begin{document}

\title{A Polynomial-time Fragment of Epistemic Probabilistic Argumentation (Technical Report)}

\author{Nico Potyka}
\affiliation{%
  \institution{University of Osnabr\"{u}ck}
}
\email{npotyka@uos.de}

\begin{abstract}
Probabilistic argumentation allows reasoning about argumentation problems in a way that is well-founded
by probability theory. However, in practice, this approach can be severely limited by the fact that probabilities are defined  by adding an exponential number of terms. 
We show that this exponential blowup can be avoided in an interesting fragment
of epistemic probabilistic argumentation and that
some computational problems that have been considered intractable
can be solved in polynomial time. We give efficient convex programming formulations for 
these problems and 
explore how far our fragment can be extended without loosing tractability.
\end{abstract}

\keywords{Probabilistic Argumentation, Algorithms for Probabilistic Argumentation, Complexity of Probabilistic Argumentation}  

\maketitle

\section{Introduction}

Abstract argumentation \cite{dung1995acceptability} studies the acceptability of arguments 
based on their relationships and abstracted from their content. 
To this end, abstract argumentation problems
can be modeled as graphs, where nodes correspond to arguments and edges to special relations like
attack or support.  In the basic setting
introduced in \cite{dung1995acceptability} only attack relations were considered.
In bipolar argumentation, this framework is extended with support relations \cite{amgoud2004bipolarity,boella2010support,cayrol2013bipolarity,CohenGGS14}.
Another useful extension is to go beyond the classical two-valued view that arguments can only be accepted or rejected.
Examples include 
ranking frameworks that can be based on fixed point equations \cite{besnard2001logic,leite2011social,barringer2012temporal,correia2014efficient} or 
the graph structure \cite{cayrol2005graduality,amgoud2013ranking} and weighted argumentation frameworks 
\cite{baroni2015automatic,rago2016discontinuity,amgoud2017evaluation,mossakowski2018modular,potyka2018Kr}.
Probabilistic argumentation frameworks express uncertainty by building up on probability theory and probabilistic reasoning methods.
Uncertainty can be introduced, for example, over possible worlds, over subgraphs of the argumentation
graph or over classical extensions
\cite{dung2010towards,li2011probabilistic,rienstra2012towards,hunter2014,doder2014probabilistic,polberg2014probabilistic,thimm2017probabilities,KidoO17,rienstra2018probabilistic,ThimmCR18,riveret2018labelling}. 
For the subgraph-based approach, the computational complexity has been studied extensively in
\cite{fazzinga2013complexity,fazzinga2018probabilistic}.

Our focus here is on the epistemic approach to probabilistic argumentation that evolved from
work in \cite{thimm2012probabilistic,hunter2013probabilistic}.
The basic idea is to consider probability functions over possible worlds in order to assign degrees of beliefs to arguments. Here, a possible world is a subset of arguments that are assumed to be accepted in this state.
Based on the relationships between arguments, the possible degrees of beliefs are then restricted by semantical constraints.
For example, the probability of an argument can be bounded from
above based on the probabilities of its attackers or bounded from
below by the probability of its supporters. This is conceptually similar to weighted argumentation frameworks, where attack relations
are supposed to decrease the strength of arguments, whereas support
relations are supposed to increase the strength \cite{baroni2015automatic,amgoud2017evaluation,mossakowski2018modular}.

Two basic computational problems for epistemic probabilistic argumentation have been introduced in \cite{HunterT16}.
The \emph{satisfiability problem} asks whether a given set of semantical constraints over an argumentation graph can be satisfied
by a probability function. The \emph{entailment problem} is to answer queries about the probability of arguments. 
To this end, probability bounds on the probability of the argument are computed based on the probability functions that satisfy
the given semantical constraints.
Based on their close relationship to problems considered in probabilistic reasoning, it has been conjectured that these problems are intractable.
However, as we will explain, both problems can actually be solved in polynomial time. Intuitively, the reason is that the semantical
constraints can only talk about atomic probability statements. For this reason, reasoning with probability functions over possible world 
turns out to be equivalent to reasoning with functions that assign probabilities to arguments directly.
We call these functions probability labellings as they can be seen as generalizations of labellings in classical abstract argumentation \cite{caminada2009logical}
that, intuitively, label arguments as rejected (probability $0$), accepted (probability $1$) or undecided (probability $0.5$).

We explain the epistemic probabilistic argumentation approach from \cite{thimm2012probabilistic,hunter2013probabilistic,HunterT16} in more detail in Section \ref{sec_background} and introduce a slight generalization
of the computational problems considered in \cite{HunterT16}. Even more general variants of these 
problems have been considered in \cite{HunterPT2018Arxiv}, but these variants are too general to obtain
polynomial runtime guarantees as we will explain in Section \ref{sec_constraints} and \ref{sec_queries} .
In Section \ref{sec_algorithms}, we show that reasoning with probability labellings
is equivalent to reasoning with probability functions when only atomic probability statements are considered and use this observation to show that
both the satisfiability and the  entailment problem considered in \cite{HunterT16} and their generalizations can be solved in polynomial time.
We then look at how far we can extend our language towards the language considered in \cite{HunterPT2018Arxiv} by allowing connecting arguments or constraints with logical connectives.
In Section \ref{sec_constraints}, we look at more expressive constraints. We find that the constraint language cannot be extended much further.
If we only allow connecting two arguments or their negation by only conjunction or disjunction in probability statements or if we allow connecting two constraints disjunctively, the satisfiability
problem becomes intractable.
In Section \ref{sec_queries}, we look at more expressive queries.
We cannot avoid an exponential blowup when considering arbitrary queries.
However, we show that when applying the principle of maximum entropy, 
conjunctive queries can still be answered in polynomial time.
In particular, we show that a compact representation of the maximum entropy probability function that satisfies the constraints can be computed in
polynomial time. However, when only atomic constraints are considered, the principle of maximum
entropy implies strong independency assumptions that may yield counterintuitive probabilities.

\section{Background}
\label{sec_background}

We consider bipolar argumentation frameworks (BAFs) $(\args, \attacks, \supports)$
consisting of a set of arguments $\args$, an attack relation $\attacks \subseteq \args \times \args$ and a 
support relation $\supports \subseteq \args \times \args$.
$\attacker(A) = \{B \in \args \mid (B,A) \in \attacks\}$ denotes the set of attackers of an argument $A$
and $\supporter(A) = \{B \in \args \mid (B,A) \in \supports\}$ denotes its supporters.
We visualize bipolar argumentation frameworks as graphs, where arguments are denoted as nodes,
solid edges denote attack relations and dashed edges denote support relations.
Figure \ref{ex_graph_1} shows an example BAF with four arguments $A, B, C, D$.
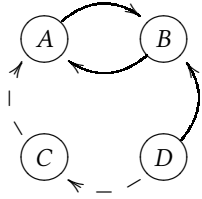
\begin{figure}[tb]
		\scalebox{1.1}{
			\xymatrix{
*++[o][F-]{A} \ar@{->}@/^1pc/[r]  &*++[o][F-]{B} \ar@{->}@/^1pc/[l]   \\
*++[o][F-]{C} \ar@{-->}@/^1pc/[u] &*++[o][F-]{D}  \ar@{-->}@/^1pc/[l] \ar@{->}@/_1pc/[u]
		}	
	}
\caption{A simple example BAF.\label{ex_graph_1}}
\end{figure}

We define a possible world as a subset of arguments $w \subseteq \args$. Intuitively,
$w$ contains the arguments that are accepted in a particular state of the world. 
As usual, $2^\args$ denotes the set of all
subsets of $\args$, that is, the set of all possible worlds.
An agent can regard certain states of the world more likely than others.
In order to formalize agents' beliefs, we consider probability
functions $P: 2^\args \rightarrow [0,1]$ such that $\sum_{w \in 2^\args} P(w) = 1$.
We denote the set of all probability functions over $\args$ by $\probDists$.
The probability of an argument $A \in \args$ under $P$
is defined by adding the probabilities of all worlds in which $A$ is accepted, that is, 
$P(A) = \sum_{w \in 2^\args, A \in w} P(w)$. $P(A)$ can be understood as a degree of belief
of an agent, where $P(A) = 1$ means complete acceptance and $P(A)=0$ means complete rejectance.

Given an argumentation graph, a probability function should maintain reasonable relationships
between the probabilities of arguments based on their relationships in the graph. 
For example, if an argument is accepted, its attackers should not be accepted.
We introduce some additional terminology for the discussion.
\begin{definition}
Let $P$ be a probability function and let $F$ be a formula over $\args$. We say that
\begin{enumerate}
	\item $P$ (classically) accepts $F$ iff $P(F) > 0.5$ ($P(F) = 1$).
	\item $P$ (classically) rejects $F$ iff $P(F) < 0.5$ ($P(F) = 0$).
\end{enumerate}
\end{definition}
In order to give meaningful semantics to edges in the graph, several constraints have been introduced
in the literature that can be imposed on the probability functions. For the satisfiability and
entailment problem in \cite{HunterT16},
the following constraints have been considered (for attack-only graphs).
\begin{description}
 \item[COH:] $P$ is called \emph{coherent} if for all $A, B \in \args$ with $(A,B) \in \attacks$, we have $P(B) \leq 1 - P(A)$.
 \item[SFOU:] $P$ is called \emph{semi-founded} if $P(A) \geq 0.5$ for all $A \in \args$ with $\attacker(A) = \emptyset$.
 \item[FOU:] $P$ is called \emph{founded} if $P(A) = 1$ for all $A \in \args$ with $\attacker(A) = \emptyset$.
 \item[SOPT:] $P$ is called \emph{semi-optimistic} if $P(A) \geq 1 - \sum_{B \in \attacker(A)} P(B)$
for all $A \in \args$ with $\attacker(A) \neq \emptyset$.
 \item[OPT:] $P$ is called \emph{optimistic} if $P(A) \geq 1 - \sum_{B \in \attacker(A)} P(B)$.
 \item[JUS:] $P$ is called \emph{justifiable} if $P$ is coherent and optimistic.
\end{description}
The intuition for these constraints comes from the idea that probability $0.5$ represents indifference, whereas probabilities smaller (larger) than $0.5$ tend towards rejectance (acceptance) of the argument.
Coherence imposes an upper bound on the beliefs in arguments based on the beliefs 
in their attackers. Semi-Foundedness says that an agent should not tend to reject an argument if there is no reason for this. Foundedness even demands that the argument should be fully accepted in this case.
Semi-optimistic and Optimistic give lower bounds for the belief in an argument that decrease as the belief in its attackers increases. Usually, not all constraints are employed, but a subset is selected that seems reasonable
for a particular application. 
\begin{example}
\label{example_1}
If we demand COH and FOU for the BAF in Figure \ref{ex_graph_1}, we get 
$P(C) = 1$ and $P(D)=1$ from FOU. From COH, we get $P(A) \leq 1 - P(B)$, $P(B) \leq 1-P(A)$ and
$P(B) \leq 1 - P(D)$. Since $P(D)=1$, the last inequality implies $P(B) = 0$.
\end{example}
Inspired by the probabilistic entailment problem from probabilistic logic \cite{Nilsson1986AI,georgakopoulos1988,HansenJaumard2000},
the authors in \cite{HunterT16} considered the following reasoning problems:
Given a partial probability assignment (constraints of the form $P(A) = x$ for some $A \in \args$)
and a subset of the semantical constraints,
\begin{enumerate}
 \item decide whether there is a probability function that satisfies the partial probability assignment and the semantical constraints,
 \item compute lower and upper bounds on the probability of an argument among all probability functions that satisfy the partial probability assignment and the semantical constraints,
 \item decide whether given lower and upper bounds on the probability of an argument are taken by  
  probability functions that satisfy the partial probability assignment and the semantical constraints.
\end{enumerate}
Because of their similarity to intractable probabilistic reasoning problems, 
it has been conjectured that these problems are intractable as well. 
However, as we will explain soon, all three problems can be solved in polynomial time.

\section{Linear Atomic Constraints}

The discussion in \cite{HunterT16} was restricted to attack-only graphs. When we consider
support edges, new constraints are necessary.
In \cite{HunterPT2018Arxiv}, a general constraint language has been introduced
that allows expressing the previous constraints, but also more flexible constraints
that can take account of support relations or of both support and attack relations 
simultaneously.
In particular, constraints can contain complex formulas of arguments and constraints
can be connected via logical connectives. 
Unfortunately, expressiveness hardly ever comes without cost. 
Therefore, we consider only a simple fragment for now.
It still captures the semantical constraints from \cite{HunterT16} that we discussed, 
but also gives us polynomial performance guarantees.
\begin{definition}[Linear Atomic Constraint, Satisfiability]
A \emph{linear atomic constraint} is an expression of the form 
$\sum_{i=1}^n c_i \cdot \pi(A_i) \leq c_0$,
where $A_i \in \args$ and $c_i \in \mathbb{R}$.
A probability function $P$ satisfies a linear atomic constraint iff
$\sum_{i=1}^n c_i \cdot P(A_i) \leq c_0$.
$P$ satisfies a set of linear atomic constraints $\constraints$, denoted as $P \models \constraints$, iff it satisfies all $l \in C$.
In this case, $\constraints$ is called satisfiable.
\end{definition}
Note that, in our notation, $\pi$ is just a syntactic symbol that is used to write constraints. 
$P$ denotes probability functions that may or may not satisfy these constraints.
Note also that constraints with $\geq$ and $=$ can be expressed in our language as well. 
For $\geq$, just note that 
$\sum_{i=1}^n c_i \cdot \pi(A_i) \leq c_0$ is equivalent to
$\sum_{i=1}^n -c_i \cdot \pi(A_i) \geq -c_0$.
For $=$, note that 
$\sum_{i=1}^n c_i \cdot \pi(A_i) \leq c_0$ and
$\sum_{i=1}^n c_i \cdot \pi(A_i) \geq c_0$ together are equivalent to
$\sum_{i=1}^n c_i \cdot \pi(A_i) = c_0$.
We merely restrict our language to constraints with $\leq$ in order to keep the notation simple.
Notice that this restriction is also not important for complexity considerations because the number
of constraints just changes by a constant factor when using $\geq$ and $=$. 

We can now rephrase the computational problems from \cite{HunterT16} for arbitrary linear atomic constraints. 
We consider only the satisfiability and
entailment problem. Since the entailment problem can be solved in polynomial time,
there is no need to look at its decision variant introduced in \cite{HunterT16}.
We also do not need special partial probability assignments, since they can just be expressed by atomic
constraints of the form $\pi(A) = p$.
 Formally, we consider the following computational problems:
\begin{description}
\item[PArgAtSAT:] Given a finite set of linear atomic constraints $C$, decide whether it is satisfiable.
\item[PArgAtENT:] Given a finite set of satisfiable linear atomic constraints $C$ and an argument $A$, compute lower and upper bounds on the probability of $A$ among the probability functions that satisfy $C$.
More precisely, solve the two optimization problems
\begin{align*}
\min_{P \in \probDists}/\max_{P \in \probDists} \quad &P(A) \\
\textit{such that} \quad &P \models \constraints.
\end{align*} 
\end{description}
In the naming scheme, PArg stands for probabilistic argumentation, At for the restriction to linear atomic a
constraints and SAT and ENT stand for satisfiability and entailment, respectively. 
As we mentioned already, the computational problems from \cite{HunterPT2018Arxiv} are indeed a special
case because the partial probability assignments can be encoded as constraints
in $\constraints$. We illustrate this in the following example.
\begin{example}
Consider the BAF in Figure \ref{ex_graph_1}.
Say our partial probability assignment assigns probability $1$ to $B$ and $0$ to $C$.
These assignments correspond to the two linear constraints $\pi(B) = 1$ and $\pi(C) = 0$.
Say we also impose COH. Then, we additionally have the constraints $\pi(A) + \pi(B) \leq 1$ and
$\pi(B) + \pi(D) \leq 1 $. Taken together, these constraints imply that every probability function $P$ 
that satisfies all constraints, must satisfy $P(B)=1$, $P(C)=0$ (partial assignment constraints),
$P(A)=0$ and $P(D) = 0$ (follow with coherence constraints). 
Note that when also adding the foundedness constraints $\pi(C)=1$ and $\pi(D)=1$, the set of constraints becomes unsatisfiable.
\end{example}

For support relations, we can define constraints dual to the attack-only constraints from \cite{HunterT16}. To this end, we replace
$\attacks$ with $\supports$, probability $(1-p)$ with $p$, $\leq$ with $\geq$ and vice versa.
\begin{description}
 \item[S-COH:] $P$ is called \emph{support-coherent} if for all $A, B \in \args$ with $(A,B) \in \supports$, we have $P(B) \geq P(A)$.
 \item[SSCE:] $P$ is called \emph{semi-sceptical} if $P(A) \leq 0.5$ for all $A \in \args$ with $\supporter(A) = \emptyset$.
 \item[SCE:] $P$ is called \emph{sceptical} if $P(A) = 0$ for all $A \in \args$ with $\supporter(A) = \emptyset$.
 \item[SPES:] $P$ is called \emph{semi-pessimistic} if $P(A) \leq \sum_{B \in \supporter(A)} P(B)$
for all $A \in \args$ with $\supporter(A) \neq \emptyset$.
 \item[PES:] $P$ is called \emph{pessimistic} if $P(A) \leq \sum_{B \in \supporter(A)} P(B)$.
\end{description}
Notice that S-COH is dual to COH, SSCE and SCE are dual to SFOU and FOU, SPES and PES are dual to SOPT and OPT. Intuitively, support-coherence says that an argument must be believed at least as strong as its supporter. Semi-scepticality says that an agent should not tend to accept an argument if there is no reason for this. Scepticality demands that the argument should be fully rejected in this case.
Semi-pessimism and Pessimism give upper bounds on the belief in an argument based on the belief in its supporters.
\begin{example}
If we add S-COH to our constraints from Example \ref{example_1}, 
we must have $P(C) \geq P(D)$ and $P(A) \geq P(C)$ for every satisfying probability function $P$.
Since we already know that $P(C)=1$, we can conclude $P(A)=1$.
Overall, the constraints imply $P(A)= P(C) = P(D) = 1$ and $P(B) = 0$.
\end{example}
Note that if both attack and support relations are present, some constraints can be incompatible.
For example, if we employ both foundedness and scepticality, every satisfying probability function
must satisfy $1 = P(A) = 0$ for every unattacked argument, which is clearly impossible.

\section{Polynomial-time Algorithms for Linear Atomic Constraints	}
\label{sec_algorithms}

In this section, we will give polynomial-time algorithms for PArgAtSAT and PArgAtENT.
These algorithms are linear programming algorithms that are commonly applied in this area
\cite{Nilsson1986AI,georgakopoulos1988,HansenJaumard2000}.
Conceptually, the algorithms work as follows:
\begin{enumerate}
	\item Take a knowledge base (and a query) as input and build up a corresponding linear program.
	\item Apply a linear programming solver in order to solve the problem.
\end{enumerate}
Interior-point methods can solve linear programming problems in polynomial time with respect to
the size of the linear program \cite{Boyd2004}. Unfortunately, the linear programs often become
exponentially large for probabilistic reasoning problems because the number of worlds is
exponential in the number of atoms of the language.
However, since our constraint language actually only allows talking about atoms, we can do better.
To this end, we will replace probability functions of exponential size with probability labellings
of linear size.

We define a \emph{probability labelling} as a function $L: \args \rightarrow [0,1]$. That is, a probability labelling
assigns a degree of belief to arguments directly, rather than in an indirect way using possible worlds.
$\probLabs$ denotes the set of all probability labellings over $\args$.
We will now show that probability labellings correspond to equivalence classes of probability functions
and that by restricting to these equivalence classes (represented by probability labellings),
we can solve PArgAtSAT and PArgAtENT in polynomial time.

We call two probability functions $P_1, P_2$ atomically equivalent, denoted as $P_1 \equiv P_2$, 
iff $P_1(A) = P_2(A)$ for all $A \in \args$. Atomic equivalence is an equivalence relation. 
$[P] = \{P' \in \probDists \mid P' \equiv P\}$ denotes the equivalence class of $P$
and $\probDists/\equiv \ = \{[P] \mid P \in \probDists\}$ denotes the set of all equivalence classes.
We first note that there is a one-to-one relationship between $\probDists/\equiv$ and $\probLabs$.
\begin{lemma}
\label{lemma_one_to_one_correspondence}
The function $r: \probDists/\equiv \ \rightarrow \probLabs$ defined by $r([P]) = L_P$, where
$L_P(A) = P(A)$ for all $A \in \args$ is a bijection.  
\end{lemma}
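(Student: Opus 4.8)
The plan is to show that $r$ is well-defined, injective, and surjective, each of which is essentially immediate from the definition of atomic equivalence.

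First I would check that $r$ is well-defined. Suppose $[P] = [P']$, i.e. $P \equiv P'$. By the definition of atomic equivalence, $P(A) = P'(A)$ for all $A \in \args$, so $L_P = L_{P'}$ as functions $\args \to [0,1]$. Hence $r([P])$ does not depend on the chosen representative, and since each $L_P$ is visibly a map $\args \to [0,1]$, it is indeed an element of $\probLabs$.

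Next I would verify injectivity. If $r([P]) = r([P'])$, then $L_P = L_{P'}$, so $P(A) = L_P(A) = L_{P'}(A) = P'(A)$ for every $A \in \args$, which is exactly the statement $P \equiv P'$, i.e. $[P] = [P']$. The only mild subtlety here is that this is really the converse direction of the well-definedness argument, so the two together say $r$ factors through $\equiv$ as a genuine bijection onto its image.

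Finally, surjectivity is the step that requires an actual construction rather than bookkeeping, so I expect it to be the main (though still modest) obstacle. Given an arbitrary probability labelling $L \in \probLabs$, I need to exhibit a probability function $P \in \probDists$ with $P(A) = L(A)$ for all $A \in \args$; then $r([P]) = L$. The natural choice is the product distribution that treats the arguments as independent: define $P(w) = \prod_{A \in w} L(A) \cdot \prod_{A \in \args \setminus w} (1 - L(A))$ for each $w \in 2^\args$. One checks that $\sum_{w \in 2^\args} P(w) = \prod_{A \in \args}\bigl(L(A) + (1 - L(A))\bigr) = 1$, so $P \in \probDists$, and that $P(A) = \sum_{w \ni A} P(w) = L(A)$ by marginalizing out the remaining coordinates (the sum over the other arguments again telescopes to $1$). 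Since $\args$ is finite these products and sums are well-defined. This establishes surjectivity and completes the proof that $r$ is a bijection. (It is worth noting in passing that this product construction is exactly the point that will later make the maximum-entropy discussion relevant, since it is the maximum-entropy member of the equivalence class.)
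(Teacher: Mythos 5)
Your proposal is correct and follows essentially the same route as the paper: well-definedness and injectivity directly from the definition of $\equiv$, and surjectivity via the product distribution $P_L(w) = \prod_{A \in w} L(A) \cdot \prod_{A \in \args \setminus w} (1 - L(A))$. The only cosmetic difference is that you verify normalization by factoring the sum as $\prod_{A \in \args}\bigl(L(A) + (1 - L(A))\bigr) = 1$, whereas the paper does the same computation by induction on $|\args|$; your closing remark that $P_L$ is the maximum-entropy member of $[P_L]$ is indeed confirmed later in the paper.
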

\begin{proof}
First note that $r$ is well-defined: this is because, for all $P' \in [P]$, we have $L_{P'}(A)=P'(A) = P(A) = L_P(A)$ for all $A \in \args$ by definition of $\equiv$. 

$r$ is injective for if $r([P_1]) = r([P_2])$, then $P_1(A) = L_{P_1}(A) = L_{P_2}(A) = P_2(A)$ for
all $A \in \args$. That is, $P_1 \equiv P_2$ and $[P_1] = [P_2]$.  

$r$ is also surjective. To see this, consider an arbitrary $L \in \probLabs$. 
Define $P_L: 2^\args \rightarrow [0,1]$ via $P_L(w) = \prod_{A \in w} L(A) \cdot \prod_{A \in \args \setminus w} (1- L(A))$
for all $w \in 2^\args$. 
We prove by induction over the number of arguments that $\sum_{w \in 2^\args } P_L(w) = 1$.
For the base case, consider $\args = \{A\}$. Then $P_L(\emptyset) + P_L(\{A\}) = \big(1-L(A)\big) + L(A) = 1$.
For the induction step, consider $|\args| = n+1$ and let $B \in \args$.
Then
\begin{align*}
\sum_{w \in 2^\args } P_L(w) 
&=  \sum_{w \in 2^\args } \prod_{A \in w} L(A) \cdot \prod_{A \in \args \setminus w} (1- L(A)) \\
&= \big(1- L(B)\big) \sum_{w \in 2^{\args \setminus \{B\}}} \prod_{A \in w} L(A) \cdot \prod_{A \in \args \setminus w} (1- L(A)) \\
&+ L(B) \sum_{w \in 2^{\args \setminus \{B\}}} \prod_{A \in w} L(A) \cdot \prod_{A \in \args \setminus w} (1- L(A)) \\
&= \big(1- L(B)\big) + L(B) = 1.
\end{align*}
In the second and third row, we partitioned the worlds in those that reject $B$ (second row)
and those that accept $B$ (third row).  
Notice that the sums in the second and third row correspond to possible worlds over a set of arguments
of length $n$, so that our induction hypothesis implies that they sum up to 1.
Hence, $P_L$ is a probability function. 
Furthermore, for all $B \in \args$, we have
\begin{align*}
P_L(B) &= \sum_{w \in 2^\args, B \in w} P(w) \\
&= L(B) \sum_{w \in 2^{\args \setminus \{B\}}} \prod_{A \in w} L(A) \cdot \prod_{A \in \args \setminus w} (1- L(A))
= L(B),
\end{align*}
where we used again the fact that the sum in the second row has to sum up to $1$.
Hence, $r([P_L]) = L$ and $r$ is also surjective and thus bijective.
\end{proof}
Intuitively, $r$ determines a compact representative for the equivalence class $[P]$, namely 
the probability labelling $L_P = r([P])$.
We say that a probability labelling $L$ satisfies a linear atomic constraint
$\sum_{i=1}^n c_i \cdot \pi(A_i) \leq c_0$ iff $\sum_{i=1}^n c_i \cdot L(A_i) \leq c_0$.
The following lemma explains that we can capture the set of all probability functions
that satisfy a constraint by the set of labellings that satisfy the constraint. 
\begin{lemma}
\label{lemma_constraint_equivalence}
The following statements are equivalent:
\begin{enumerate}
	\item P satisfies a linear atomic constraint $l$.
	\item All $P' \in [P]$ satisfy $l$.
	\item  $L_P = r([P])$ satisfies $l$.
\end{enumerate}
\end{lemma}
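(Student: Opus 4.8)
The plan is to prove the three statements equivalent by showing the cycle $(1) \Rightarrow (2) \Rightarrow (3) \Rightarrow (1)$, leaning entirely on the definition of atomic equivalence $\equiv$ and on Lemma~\ref{lemma_one_to_one_correspondence}.

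For $(1) \Rightarrow (2)$: suppose $P$ satisfies $l = \sum_{i=1}^n c_i \cdot \pi(A_i) \leq c_0$, i.e.\ $\sum_{i=1}^n c_i \cdot P(A_i) \leq c_0$. Take any $P' \in [P]$. By definition of $\equiv$ we have $P'(A_i) = P(A_i)$ for every $i$, so $\sum_{i=1}^n c_i \cdot P'(A_i) = \sum_{i=1}^n c_i \cdot P(A_i) \leq c_0$; hence $P'$ satisfies $l$. For $(2) \Rightarrow (3)$: since $P \in [P]$ (reflexivity of $\equiv$), statement $(2)$ gives that $P$ itself satisfies $l$; and by definition of $r$ in Lemma~\ref{lemma_one_to_one_correspondence}, $L_P(A_i) = P(A_i)$ for all $i$, so $\sum_{i=1}^n c_i \cdot L_P(A_i) = \sum_{i=1}^n c_i \cdot P(A_i) \leq c_0$, which is exactly the condition for $L_P$ to satisfy $l$. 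For $(3) \Rightarrow (1)$: run the same identity in reverse --- $L_P(A_i) = P(A_i)$ gives $\sum_{i=1}^n c_i \cdot P(A_i) = \sum_{i=1}^n c_i \cdot L_P(A_i) \leq c_0$, so $P$ satisfies $l$.

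There is essentially no obstacle here: the entire content is that a linear atomic constraint only ever reads off the values $P(A_i)$, and those values are by construction invariant across an equivalence class and faithfully recorded by the probability labelling $L_P$. The only thing worth being careful about is to state explicitly that $\equiv$ is reflexive (so $P \in [P]$) when passing from $(2)$ to $(3)$, and to note that all three chains of equalities are term-by-term, requiring nothing beyond $P(A_i) = P'(A_i) = L_P(A_i)$. I would keep the write-up to a few lines, spelling out the one displayed chain of (in)equalities and letting the reflexivity remark carry the step that otherwise looks like it needs more.

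If a more uniform presentation is preferred, an alternative is to observe directly that $P \mapsto \big(P(A_1),\dots,P(A_n)\big)$ factors through $\probDists/\!\equiv$ and, via $r$, through $\probLabs$, so that the real-valued quantity $\sum_{i=1}^n c_i \cdot P(A_i)$ depends only on $[P]$ and on $L_P$; the three satisfaction conditions are then literally the same numerical inequality evaluated on three different but equal representations of the same tuple. Either way the proof is a one-paragraph unfolding of definitions, and I do not expect any genuinely hard step.
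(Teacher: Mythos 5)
Your proposal is correct and rests on exactly the same observation as the paper's proof, namely that $\sum_{i=1}^n c_i \cdot P(A_i) = \sum_{i=1}^n c_i \cdot P'(A_i) = \sum_{i=1}^n c_i \cdot L_P(A_i)$ for all $P' \in [P]$, so all three satisfaction conditions are the same numerical inequality; your cyclic organization $(1)\Rightarrow(2)\Rightarrow(3)\Rightarrow(1)$ is just a slightly more spelled-out presentation of the paper's one-line argument.
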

\begin{proof}
This follows immediately from the satisfaction definition and the observation that $\sum_{i=1}^n c_i \cdot P(A_i) = \sum_{i=1}^n c_i \cdot L_P(A_i)  = \sum_{i=1}^n c_i \cdot P'(A_i)$ for all $P' \in [P]$.
\end{proof}
We can now show that both PArgAtSAT and PArgAtENT can be decided in polynomial time.
To this end, we replace the probability functions with
probability labellings in our optimization
problems and show that this does not change the outcome.
\begin{theorem}
\label{prop_sat_polynomial_time}
PArgAtSAT can be solved in polynomial time. 
In particular, when given $n$ arguments $\args = \{A_1, \dots, A_n\}$ and $m$ constraints $\constraints = \{\sum_{i=1}^n c^{(j)}_i \cdot \pi(A_i) \leq c^{(j)}_0 \mid 1 \leq j \leq m\}$, then $\constraints$ is satisfiable if and only if the linear optimization problem
\begin{align*}
	\min_{(x,s) \in \mathbb{R}^{n+m}} \quad & \sum_{i=1}^m s_i \\
	\textit{such that} \quad &\sum_{i=1}^n c^{(j)}_i \cdot x_i \leq c^{(j)}_0 + s_j, \quad 1 \leq j \leq m, \\
	&0 \leq x \leq 1, s \geq 0,
\end{align*}
has minimum 0.
\end{theorem}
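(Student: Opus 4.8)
The plan is to reduce satisfiability of $\constraints$ by a probability function to the feasibility of a polytope in $[0,1]^n$, and then to read off that feasibility test from the stated slack-variable linear program. The real work has already been done in Lemmas~\ref{lemma_one_to_one_correspondence} and~\ref{lemma_constraint_equivalence}; what remains is to assemble them and to check the (routine) properties of the LP.

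First I would show that $\constraints$ is satisfiable by some $P \in \probDists$ if and only if there is a point $x \in [0,1]^n$ with $\sum_{i=1}^n c^{(j)}_i \cdot x_i \leq c^{(j)}_0$ for all $1 \leq j \leq m$. For one direction, if $P \models \constraints$ then by Lemma~\ref{lemma_constraint_equivalence} the probability labelling $L_P = r([P])$ satisfies every constraint in $\constraints$, so $x_i := L_P(A_i)$ is such a point (and $x \in [0,1]^n$ since $L_P$ maps into $[0,1]$). Conversely, given such an $x$, define a probability labelling $L$ by $L(A_i) := x_i$; the surjectivity construction in the proof of Lemma~\ref{lemma_one_to_one_correspondence} yields a probability function $P_L \in \probDists$ with $r([P_L]) = L$, and by Lemma~\ref{lemma_constraint_equivalence} we get $P_L \models \constraints$. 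Hence satisfiability of $\constraints$ is equivalent to non-emptiness of the polytope $Q := \{\, x \in [0,1]^n \mid \sum_{i=1}^n c^{(j)}_i \cdot x_i \leq c^{(j)}_0,\ 1 \leq j \leq m \,\}$.

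Next I would relate non-emptiness of $Q$ to the slack LP. The LP is always feasible and its objective is bounded below by $0$: the point $x := 0$ with $s_j := \max\{0, -c^{(j)}_0\}$ is feasible, and $\sum_{i=1}^m s_i \geq 0$ on the whole feasible region because $s \geq 0$ is imposed; as a feasible linear program bounded below, it attains its minimum, so ``has minimum $0$'' is well-posed. If $Q \neq \emptyset$, choose $x^* \in Q$; then $(x^*, 0)$ is feasible with objective value $0$, so the minimum is $0$. Conversely, if the minimum is $0$, an optimal $(x^*, s^*)$ has $\sum_{i=1}^m s^*_i = 0$ with $s^* \geq 0$, which forces $s^* = 0$; then $x^*$ satisfies $\sum_{i=1}^n c^{(j)}_i \cdot x^*_i \leq c^{(j)}_0$ for all $j$ and $0 \leq x^* \leq 1$, i.e. $x^* \in Q$. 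Together with the previous step, $\constraints$ is satisfiable if and only if the LP has minimum $0$.

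For the complexity claim I would note that the LP has $n+m$ variables, $m$ inequality constraints and $2n$ bound constraints, and its bit-size is linear in the size of $\constraints$; polynomial-time interior-point methods for linear programming \cite{Boyd2004} then compute its optimum, hence decide PArgAtSAT, in polynomial time. I expect no genuine obstacle here: the only points needing a little care are the attainment of the minimum and the use of the surjectivity construction of Lemma~\ref{lemma_one_to_one_correspondence} to turn a feasible labelling back into a probability function, both of which are immediate.
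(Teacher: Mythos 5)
Your proposal is correct and follows essentially the same route as the paper's proof: it uses the labelling--probability-function correspondence of Lemmas \ref{lemma_one_to_one_correspondence} and \ref{lemma_constraint_equivalence}, exhibits the same feasible point $(0,s)$ with $s_j=\max\{0,-c^{(j)}_0\}$ to get attainment and the lower bound $0$, and shows the minimum is $0$ exactly when a satisfying labelling (equivalently, a satisfying probability function) exists, before invoking polynomial-time LP solvability. The only difference is cosmetic ordering (you establish the polytope characterization first and are slightly more explicit that $s^*=0$ at an optimum with value $0$), so there is nothing to fix.
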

\begin{proof}
First notice that every probablistic labelling $L$ corresponds to a vector
$x \in [0,1]^n$ such that $x_i = L(A_i)$.
The points $(x,s) \in \mathbb{R}^{n+m}$ are intuitively composed of a labelling $x$ and a vector
of slack variables $s$ that relax the constraints.

To begin with, we show that the optimization problem is bounded from below by $0$ and
 always has a well-defined minimum.
Let $s \in \mathbb{R}^m$ be defined by $s_j = \max \{0, -c^{(j)}_0\}$.
Then $(0,s) \in \mathbb{R}^{n+m}$ is a feasible solution because for all constraints, we get
$\sum_{i=1}^n c^{(j)}_i \cdot 0 = 0 \leq c^{(j)}_0 + \max \{0, -c^{(j)}_0\}$.
Hence, the feasible region is non-empty and the theory of linear programming implies that the 
minimum exists \cite{bertsimas97}. In particular, since $s$ is non-negative, it is clear that the minimum can never be smaller than
$0$. 

We show next that the minimum is $0$ if and only if there is a labelling that
satisfies $\constraints$.
Assume first that the minimum is $0$ and let $(x^*,0) \in \mathbb{R}^{n+m}$ be an optimal solution. 
Consider $L^*$ defined by $L^*(A_i) = x^*_i$.
We have $\sum_{i=1}^n c^{(j)}_i \cdot L^*(A_i) 
= \sum_{i=1}^n c^{(j)}_i \cdot x^*_i \leq c^{(j)}_0 + 0$ for all $1 \leq j \leq m$.
Hence,  $L^*$ satisfies $\constraints$.

Conversely, assume that there is a probability labelling that satisfies $\constraints$.
Let $x^* \in [0,1]^n$ be defined by $x^*_i = L(A_i)$ and consider the point $(x^*,0) \in \mathbb{R}^{n+m}$.
We have 
$\sum_{i=1}^n c^{(j)}_i \cdot x^*_i = \sum_{i=1}^n c^{(j)}_i \cdot L^*(A_i) \leq c^{(j)}_0$.
Therefore, $(x^*,0)$ is a feasible solution. In particular, it yields $0$ for the objective function
and hence is minimal.

We know from the theory of linear programming that linear optimization problems can be solved in polynomial
time with respect to the number of optimization variables and constraints \cite{bertsimas97}.
We have $n+m$ optimization variables and $m$ constraints (non-negativity constraints are free). Hence, we can decide in polynomial time whether there exists a probability labelling that satisfies
$\constraints$. If there is such a labelling $L$, then the probability function $P_L$ from the proof of Lemma \ref{lemma_one_to_one_correspondence} satisfies $\constraints$ according to Lemma \ref{lemma_constraint_equivalence}. Conversely, if there is no probability 
function that satisfies $\constraints$, then there can be no labelling that satisfies it either.
For if there was such a labelling $L$, then $P_L$ would satisfy  $\constraints$ as well.
Hence, $\constraints$ can be satisfied by a probability function $P$ if and only if 
the minimum of our linear optimization problem is $0$. Hence, PArgAtSAT can be solved in polynomial time
by the given linear program. 
\end{proof}
\begin{theorem}
PArgAtENT can be solved in polynomial time. 
In particular, when given $n$ arguments $\args = \{A_1, \dots, A_n\}$ and $m$ constraints $\constraints = \{\sum_{i=1}^n c^{(j)}_i \cdot \pi(A_i) \leq c^{(j)}_0 \mid 1 \leq j \leq m\}$ such that
$\constraints$ is satisfiable, then the lower and upper bounds on the probability of $A_k$ are the results of the following linear optimization problems:
\begin{align*}
	\min_{x \in \mathbb{R}^{n}}/ \max_{x \in \mathbb{R}^{n}} \quad & x_k \\
	\textit{such that} \quad &\sum_{i=1}^n c^{(j)}_i \cdot x_i \leq c^{(j)}_0, \quad 1 \leq j \leq m, \\
	&0 \leq x \leq 1.
\end{align*} 
\end{theorem}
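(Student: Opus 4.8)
The plan is to mirror the proof of Theorem~\ref{prop_sat_polynomial_time}: I replace the optimization over probability functions, which ranges over the simplex in $\mathbb{R}^{2^n}$, by the equivalent optimization over probability labellings, which is exactly the displayed linear program over $n$ variables. The whole argument rests on the two translation lemmas established above.

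First I would observe that the feasible region $K = \{x \in [0,1]^n \mid \sum_{i=1}^n c^{(j)}_i x_i \leq c^{(j)}_0 \text{ for } 1 \leq j \leq m\}$ of the linear program is exactly the set of vectors $x$ with $x_i = L(A_i)$ for some probability labelling $L$ that satisfies $\constraints$. Since $\constraints$ is assumed satisfiable, there is a probability function $P \models \constraints$; by Lemma~\ref{lemma_constraint_equivalence}, $L_P = r([P])$ satisfies $\constraints$, so $K \neq \emptyset$. As $K$ is also closed and bounded, the continuous linear objective $x \mapsto x_k$ attains a minimum and a maximum on $K$.

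The core of the proof is the identity of value sets
\[
\{P(A_k) \mid P \in \probDists,\ P \models \constraints\} = \{x_k \mid x \in K\}.
\]
For the inclusion $\subseteq$: given $P \models \constraints$, Lemma~\ref{lemma_constraint_equivalence} gives $L_P \models \constraints$, so the vector $x$ with $x_i = L_P(A_i) = P(A_i)$ lies in $K$ and satisfies $x_k = P(A_k)$. For $\supseteq$: given $x \in K$, let $L$ be the labelling with $L(A_i) = x_i$; then $L \models \constraints$, and the probability function $P_L$ built in the proof of Lemma~\ref{lemma_one_to_one_correspondence} satisfies $P_L(A_i) = L(A_i)$ for all $i$, hence $P_L(A_k) = x_k$; moreover $r([P_L]) = L$, so Lemma~\ref{lemma_constraint_equivalence} yields $P_L \models \constraints$. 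Therefore the infimum and the supremum of $P(A_k)$ over all probability functions satisfying $\constraints$ coincide with the minimum and the maximum of the linear program, and these extrema are genuinely attained.

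For the complexity bound I would invoke the standard fact that a linear program with $n$ variables and $O(n+m)$ constraints can be solved in time polynomial in $n$ and $m$ (e.g.\ by interior-point methods \cite{bertsimas97,Boyd2004}); since the size of a PArgAtENT instance is at least linear in $n$ and $m$, this gives a polynomial-time algorithm. I do not expect a real obstacle here: the only delicate point is that the optimization over the exponentially large space of probability functions has the same optimal value as the small linear program and that this value is attained rather than merely approached — both follow from the value-set identity above together with the compactness of $K$. Everything else is a repackaging of the argument already used for PArgAtSAT.
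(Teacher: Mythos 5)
Your proposal is correct and follows essentially the same route as the paper's proof: it reduces the optimization over probability functions to the small linear program over labellings via Lemma~\ref{lemma_constraint_equivalence} and the construction $P_L$ from Lemma~\ref{lemma_one_to_one_correspondence}, and then appeals to polynomial-time solvability of linear programs. The explicit value-set identity and the compactness remark are just a cleaner packaging of the two inequalities the paper argues directly.
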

\begin{proof}
For concreteness and w.l.o.g. assume that we want to compute bounds on the probability of $A_1$.
We look only at the minimization problem for computing the lower bound (for the maximization problem,
everything is completely analogous). That is, we consider the following linear optimization problem:
\begin{align*}
	\min_{x \in \mathbb{R}^{n}} \quad & x_1 \\
	\textit{such that} \quad &\sum_{i=1}^n c^{(j)}_i \cdot x_i \leq c^{(j)}_0, \quad 1 \leq j \leq m, \\
	&0 \leq x \leq 1.
\end{align*} 
By assumption, $\constraints$ is satisfiable. Hence, the feasible region is non-empty and the
theory of linear programming implies that the minimum exists and can be computed in polynomial time \cite{bertsimas97}.  
The minimum found corresponds exactly to the smallest probability that is assigned to $A_1$ 
by a probability labelling that satisfies the constraints. To see this, note that
if we take a minimal solution $x^*$, we can construct a labelling $L$ that satisfies the constraints
as in the previous proof. In particular, $x^*_1 = L(A_1)$.
There can be no probability labelling that satisfies the constraints and assigns a smaller probability
to $A_1$ because each such labelling yields a feasible
vector $x \in \mathbb{R}^n$ with $x_1 = L(A_1)$.

Similar to before, it follows that the minimum also corresponds to the smallest probability that is assigned to $A_1$ by a probability function that satisfies $\constraints$. 
If the minimum is taken by a labelling $L$, we know that the corresponding probability function $P_L$ 
from the proof of Lemma \ref{lemma_one_to_one_correspondence} yields the same probability and satisfies $\constraints$ according to Lemma \ref{lemma_constraint_equivalence}.
Hence, the minimum cannot be smaller than the probability taken by probability functions that satisfy $\constraints$. 
Conversely, if there is a probability function that satisfies $\constraints$ and gives $P(A_1) = p$,
then the labelling $L_P$ gives $L_P(A_1) = p$ as well and satisfies $\constraints$ according to Lemma \ref{lemma_constraint_equivalence}. Hence, the minimum cannot be larger than the probability taken by probability functions that satisfy $\constraints$ either, and so it must be indeed equal.
Hence, PArgAtENT can be solved in polynomial time by the given linear program. 
\end{proof}

\section{Complex Formulas and k-th order Labellings}

Until now, we looked only at probabilities of arguments. However, the real power of probability functions 
is that they allow computing probabilities for arbitrary formulas over arguments.
By a formula over a set of arguments $\args$, we mean an expression that is formed by connecting the arguments
in $\args$ via logical connectives. Satisfaction of formulas by possible worlds is explained in the usual
recursive way. For example, $w \models \neg F$ iff $w$ does not satisfy $F$ and $w \models F \wedge G$ iff $w$
satisfies both $F$ and $G$.  
We will now also allow non-atomic linear constraints.
\begin{definition}[Linear Constraint]
A \emph{linear constraint} is an expression of the form 
$\sum_{i=1}^n c_i \cdot \pi(F_i) \leq c_0$,
where $F_i$ is a formula over $\args$ and $c_i \in \mathbb{R}$.
\end{definition}
In order to define satisfaction of non-atomic linear constraints, we have to define 
the probability of a formula $F$ under a probability function $P$.
This probability is defined by adding the probabilities of all worlds that satisfy $F$, that is, 
$P(F) = \sum_{w \in 2^\args, w \models F} P(w)$.
$P$ satisfies a linear atomic constraint $\sum_{i=1}^n c_i \cdot \pi(F_i) \leq c_0$ iff
$\sum_{i=1}^n c_i \cdot P(F_i) \leq c_0$.
The following lemma summarizes some well-known computation rules that will be useful in the following.
\begin{lemma}
\label{lemma_computation_rules}
Let $P$ be a probability function and let $F, G$ denote formulas over $\args$.
\begin{enumerate}
	\item $P(F \wedge G) + P(F \wedge \neg G) = P(F)$.
	\item If $F$ entails $G$, then $P(F) \leq P(G)$.
\end{enumerate}
\end{lemma}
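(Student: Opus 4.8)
The plan is to argue directly from the definition $P(F) = \sum_{w \in 2^\args,\, w \models F} P(w)$, reducing both claims to elementary facts about how the sets of satisfying worlds decompose, together with the non-negativity of $P$.

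For part 1, I would note that for every world $w \in 2^\args$ exactly one of $w \models G$ and $w \models \neg G$ holds, by the recursive satisfaction clause for negation. Hence the set $\{w \mid w \models F\}$ is the disjoint union of $\{w \mid w \models F \wedge G\}$ and $\{w \mid w \models F \wedge \neg G\}$: both are subsets of the worlds satisfying $F$ (by the clause for conjunction), they are disjoint, and together they exhaust $\{w \mid w \models F\}$. Splitting the sum $\sum_{w \models F} P(w)$ along this partition then gives $P(F) = \sum_{w \models F \wedge G} P(w) + \sum_{w \models F \wedge \neg G} P(w) = P(F \wedge G) + P(F \wedge \neg G)$. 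Equivalently, this is just finite additivity of $P$ applied to the logically incompatible formulas $F \wedge G$ and $F \wedge \neg G$ whose disjunction is logically equivalent to $F$, but the direct world-counting argument is the most self-contained route.

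For part 2, if $F$ entails $G$ then by definition every world satisfying $F$ also satisfies $G$, so $\{w \mid w \models F\} \subseteq \{w \mid w \models G\}$. Since $P(w) \geq 0$ for every $w$, the sum over the smaller index set is bounded above by the sum over the larger one, i.e. $P(F) = \sum_{w \models F} P(w) \leq \sum_{w \models G} P(w) = P(G)$.

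I do not expect any genuine obstacle here; the only points requiring care are invoking the correct recursive satisfaction clauses (for $\neg$ and $\wedge$ in part 1, and for the meaning of entailment in part 2) and explicitly using the non-negativity of $P$ alongside the set inclusion in part 2.
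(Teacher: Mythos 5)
Your proposal is correct and follows essentially the same route as the paper's proof: part 1 by splitting the sum over worlds satisfying $F$ into the disjoint pieces satisfying $F \wedge G$ and $F \wedge \neg G$, and part 2 by the inclusion of satisfying-world sets together with non-negativity of $P$. No gaps to report.
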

\begin{proof}
1. We have 
\begin{align*}
&P(F \wedge G) + P(F \wedge \neg G) \\
&\ = \sum_{w \in 2^\args, w \models (F \wedge G)} P(w) \ + \sum_{w \in 2^\args, w \models (F \wedge \neg G)} P(w) \\
&\ = \sum_{w \in 2^\args, w \models \big((F \wedge G) \vee (F \wedge \neg G)\big)} P(w) \\
&= P(F)
\end{align*}
where we used the fact that $(F \wedge G)$ and $(F \wedge \neg G)$ are exlusive formulas
and that $(F \wedge G) \vee (F \wedge \neg G) \equiv F \wedge (G \vee \neg G) \equiv F$. 

2. Since $F$ entails $G$, every world that satisfies $F$ also satisfies $G$ and therefore
$P(F) = \sum_{w \in 2^\args, w \models F} P(w) \leq  \sum_{w \in 2^\args, w \models G} P(w) = P(G)$.
\end{proof}
Our language is now significantly more expressive.
Unfortunately, computing $P(F)$ naively involves adding an exponential number of terms.
Is there a more efficient way to compute $P(F)$?
As we saw before, if $F$ is an atom, we can do better
because we can replace probability functions with probability labellings.
Can we generalize this idea efficiently to complex formulas?
The answer is probably negative for the following reasons. While 2SAT, the restriction of the propositional satisfiability problem
to clauses with at most two literals, can be solved in polynomial time,
the probabilistic variant 2PSAT is NP-hard \cite{georgakopoulos1988}. 
If there was a way to connect probability functions over non-atomic arguments to generalized
labellings, it could be used to solve 2PSAT. 
We will explain this in more detail in the remainder of this section.
We will also give some intuition for why complex formulas are difficult to handle.
The results developed here will not be used in subsequent sections. The reader who is not interested in the details can find the main result at the end of this section or can skip the rest of this section
entirely. 

To begin with, we generalize probability labellings to complex formulas.
Instead of assigning probabilities to single arguments, we will assign probabilities
to valuations of $k$ arguments.
Let $\valuations_k$ denote the set of all $k$-valuations of the form $(A_1 = b_1, \dots,A_k = b_k)$,
where $A_1, \dots, A_k$ are distinct arguments from $\args$ and $b_i \in \{0,1\}$ evaluates 
the $i$-th arguments as accepted ($1$) or rejected ($0$).
That is, every $v \in \valuations_k$ evaluates $k$ arguments as either accepted or rejected.
How many $k$-valuations are there? We can pick $\binom{n}{k}$ $k$-elementary subsets of arguments
and for each such subset, there are $2^k$ possible valuations. Hence, the number of $k$-valuations
is $2^k \cdot \binom{|\args|}{k}$, which is polynomial in the number of arguments. 

We define a $k$-th order 
labelling as a function $\lambda: \valuations^k \rightarrow [0,1]$ 
that assign probabilities to $k$-tuples of arguments.
We can use $k$-th order labellings to assign probabilities to formulas over $k$ 
arguments.
We say that a valuation $(A_1 = b_1, \dots,A_k = b_k) \in \valuations_k$ satisfies an argument $A \in \args$ iff $A=A_i$ for some $i \in \{1,\dots, k\}$ and $b_i=1$, that is, if it evaluates $A$ and evaluates $A$
as accepted. A valuation satisfies a complex formula if it evaluates all arguments in the complex formula
and evaluates the formula to true when interpreting it in the usual recursive way.
Note, that a valuation can satisfy a formula only if all arguments in the formula are evaluated by
the valuation. In particular, a valuation in $\valuations_k$ can satisfy only formulas with at most 
$k$ arguments by definition.

We define the probability of a formula $F$ under a $k$-th order labelling $\lambda$ 
as $\lambda(F) = \sum_{v \in \valuations_k, v \models F} \lambda(w)$ as before.
A $k$-th order labelling $\lambda$ satisfies a linear atomic constraint iff
$\sum_{i=1}^n c_i \cdot \lambda(F_i) \leq c_0$.
As before, we would like to summarize probability functions in equivalence classes such that
every labelling corresponds to the probability functions in one equivalence class.
To begin with, we define what we mean by correspondence.
\begin{definition}
A $k$-th order labelling $\lambda$ corresponds to a probability function $P$ if 
$\lambda(F) = P(F)$ for all formulas $F$ that contain at most $k$ arguments. 
\end{definition}
Clearly, every valuation of $|\args|$ arguments corresponds to a possible world
and so every $|\args|$-order labelling corresponds to a probability function.
However, this is not an interesting relationship because
$|\args|$-order labellings are still exponentially large.
Indeed, as we explained before, there are $2^k \cdot \binom{|\args|}{k}$ possible
$k$-valuations, so the size of $k$-th order labellings is exponential in $k$.
However, if $k$ is small, this is not a problem.

In order to connect $k$-th order labellings to probability functions, we can generalize
atomic equivalence to $k$-th order equivalence. Formally, we call two 
probability functions $P_1, P_2$ $k$-th order equivalent and write $P_1 \equiv_k P_2$ if 
$P_1(\bigwedge_{i=1}^k A_i^{b_i}) = P_2(\bigwedge_{i=1}^k A_i^{b_i})$
for all distinct arguments  $A_1, \dots, A_k \in \args$ and $b_i \in \{0,1\}$, 
where we let $A_i^0 := \neg A_i$ and $A_i^1 := A_i$.
Notice that first-order equivalence is basically atomic equivalence.
 Roughly speaking, two probability functions are $k$-th order equivalent if they assign the same probabilities to all conjunctions that contain exactly $k$ distinct arguments
in positive or negative form.
In order to apply our previous idea, we need a bijective mapping between $k$-th order equivalence classes
of probability functions and $k$-th order labellings.
Unfortunately, not every $k$-th order labelling corresponds to
a probability function.
\begin{example}
The definition of a first-order labelling 
admits the labelling $\lambda$ with $\lambda(A=0) = \lambda(A=1) = 1$. 
That is, $\lambda(\neg A) = \lambda(A) = 1$
However, for every probability function $P$, we necessarily have 
$P(\neg A) = 1- P(A)$. Hence, $\lambda$ cannot correspond to a probability function.
\end{example}
In general, in order to establish an interesting relationship
between $k$-th order labellings and probability functions, we have to add additional 
conditions that correspond to normalization and additivity of probability functions.
How many of these conditions do we need? 
For $k=1$, we only need one normalization condition $\lambda(A) = 1 - \lambda(\neg A)$ for every $A \in \args$. Hence.
$|\args|$ conditions are sufficient. 
Does the number keep polynomial for $k>1$.
First note that normalization is not sufficient for $k>1$. We also have to assure some marginal 
consistency conditions
that assure that the probabilities of overlapping valuations are consistent.
We know from item 1 in Lemma \ref{lemma_computation_rules} that probability functions satisfy $P(A_1 \wedge \neg  A_2) + P(A_1 \wedge A_2) =
P(A_1) = P(A_1 \wedge \neg A_3) + P(A_1 \wedge A_3)$
for all $A_1, A_2, A_3 \in \args$.
Therefore, for $k=2$, we need at least the following two types of conditions:
\begin{enumerate}
	\item $\lambda(\neg A_1 \wedge \neg  A_2) + \lambda(\neg A_1 \wedge A_2) + \lambda(A_1 \wedge \neg A_2) + \lambda(A_1 \wedge A_2) = 1$
for all distinct $A_1, A_2 \in \args$ (normalization) and
	\item $\lambda(A_1 \wedge \neg  A_2) + \lambda(A_1 \wedge A_2) = \lambda(A_1 \wedge \neg A_3) + \lambda(A_1 \wedge A_3)$
for all distinct $A_1, A_2, A_3 \in \args$ (marginal consistency).
\end{enumerate}
The number of normalization and marginal consistency conditions is still polynomial.
Unfortunately, they are not sufficient to guarantee that every labelling 
corresponds to a probability function.
\begin{example}
\label{example_3_order_labelling_constraints}
Consider three arguments $A, B, C$ and the second-order labelling $\lambda$ 
shown in Table \ref{example_3_order_labelling_constraints_figure}. 
It is easy to check that $\lambda$ does indeed satisfy
the normalization and marginalization conditions. Now assume that $\lambda$
corresponds to a probability function $P$. 
Since a conjunction entails every subconjunction, we get from item 2 of
Lemma \ref{lemma_computation_rules}  that
$P(A \wedge B  \wedge C) \leq P(A \wedge C) = \lambda(A \wedge C) = 0$
and that
$P(A \wedge B  \wedge \neg C) \leq P(B \wedge \neg C) = \lambda(B \wedge \neg C) = 0$.
From item 1 of Lemma \ref{lemma_computation_rules}, we get
$P(A \wedge B )  = P(A \wedge B  \wedge C) + P(A \wedge B  \wedge \neg C)$.
This implies $0.5 = \lambda(A \wedge B ) = P(A \wedge B ) = P(A \wedge B  \wedge \neg C) + P(A \wedge B  \wedge \neg C)
 = 0 +0 =0$, which is clearly a contradiction. Hence, $\lambda$ does not correspond
to a probability function even though it satisfies our basic normalization and marginalization conditions.
\begin{table}
	\begin{tabular}{lllllllll}
		\hline
		$A$ & $B$ & $\lambda(A, B)$ & $A$ & $C$ & $\lambda(A, C)$ & $B$ & $C$ & $\lambda(B,C)$   \\[0.0cm]
		\hline
		$0$ & $0$ & $0.5$ & $0$ & $0$ & $0$ & $0$ & $0$ & $0.5$  \\[0.0cm]
		$0$ & $1$ & $0$ & $0$ & $1$ & $0.5$ & $0$ & $1$ & $0$  \\[0.0cm]
		$1$ & $0$ & $0$ & $1$ & $0$ & $0.5$ & $1$ & $0$ & $0$  \\[0.0cm]
		$1$ & $1$ & $0.5$ & $1$ & $1$ & $0$ & $1$ & $1$ & $0.5$  \\[0.0cm]
	\end{tabular}
	\caption{3rd-order labelling for Example \ref{example_3_order_labelling_constraints}
	\label{example_3_order_labelling_constraints_figure}}	
\end{table}
\end{example}
Example \ref{example_3_order_labelling_constraints} shows that it does not suffice to guarantee
marginalization consistency for up to $k$ arguments. Even though the probabilities may be locally
consistent, they may be impossible to obtain from a probability function over all arguments.  

Is there a clever way to extend our conditions? In order to shed some light on this question,
let us first note that all our conditions can be expressed
as linear constraints.
For example, for $k=2$, we have the constraints
\begin{enumerate}
	\item $\pi(\neg A_1 \wedge \neg  A_2) + \pi(\neg A_1 \wedge A_2) + \pi(A_1 \wedge \neg A_2) + \pi(A_1 \wedge A_2) = 1$
for all distinct $A_1, A_2 \in \args$ (normalization) and
	\item $\pi(A_1 \wedge \neg  A_2) + \pi(A_1 \wedge A_2) = \pi(A_1 \wedge \neg A_3) + \pi(A_1 \wedge A_3)$
for all distinct $A_1, A_2, A_3 \in \args$ (marginal consistency).
\end{enumerate}
Clearly, a $k$-th order labelling satisfies the conditions iff it satisfies the linear constraints.
Indeed, it is hard to imagine how non-linear conditions could help to solve the problem in Example \ref{example_3_order_labelling_constraints}. Therefore, we may as well ask, is there a polynomially sized set of linear constraints such that every $k$-th order labelling that satisfies the constraints corresponds to a probability function? As we saw before, the answer is yes for $k=1$ because we only need one normalization
condition for every argument. 
However, for $k>1$, such a set cannot exist under the usual complexity-theoretical assumptions.
\begin{proposition}
If $P \neq NP$, then, for $k>1$, there is no set of linear constraints $C$ such that the size of $C$ is polynomial in $\args$ and every $k$-th order labelling that satisfies $C$ corresponds to a probability function.
\end{proposition}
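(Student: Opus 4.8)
The plan is to show that the existence of such a $C$ would place the NP-hard problem 2PSAT \cite{georgakopoulos1988} into polynomial time, so that $P \neq NP$ forbids it. It suffices to treat $k=2$; for $k>2$ one pads each formula occurring below with dummy arguments up to arity $k$, which reduces to the same argument. I read the statement in the way that parallels the $k=1$ case discussed above: $C$ is satisfied by a $k$-th order labelling exactly when that labelling corresponds to a probability function (the bare ``satisfies $C$ $\Rightarrow$ corresponds'' reading would be met vacuously by any unsatisfiable $C$, so a matching completeness direction is clearly intended).

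Given a 2PSAT instance over distinct variables $x_1,\dots,x_n$ with constraints $\Pr(S_j)=p_j$, $1\le j\le m$, each $S_j$ a formula over at most two of the $x_i$, I would first normalize it, using Lemma~\ref{lemma_computation_rules}(1), so that every $S_j$ mentions \emph{exactly} two variables: a constraint $\Pr(\ell)=p$ on a single literal $\ell$ over $x_a$ is equivalent to $\Pr(\ell\wedge x_b)+\Pr(\ell\wedge\neg x_b)=p$ for any fixed $x_b\neq x_a$, and here $\ell\wedge x_b$ and $\ell\wedge\neg x_b$ are two-variable formulas ($0$-variable constraints are trivially (in)feasible and $n\le 1$ is trivial). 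Put $\args=\{x_1,\dots,x_n\}$ (attack and support relations are irrelevant here). Then form the linear feasibility problem in the variables $\{y_v\mid v\in\valuations_2\}$, of which there are only $4\binom{n}{2}$, with the constraints: $0\le y_v\le 1$ for all $v$; each constraint $\sum_i c_i\,\pi(F_i)\le c_0$ of $C$, rewritten as $\sum_i c_i\big(\sum_{v\models F_i}y_v\big)\le c_0$; and, for each $j$, the equality $\sum_{v\in\valuations_2,\,v\models S_j}y_v=p_j$, which is unambiguous since a $2$-valuation can satisfy the two-variable formula $S_j$ only by evaluating exactly $\mathrm{vars}(S_j)$. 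As $|C|$ is polynomial in $|\args|$ (and, like any concretely described characterization, $C$ is polynomial-time computable), this is a linear program of size polynomial in $n+m$, so its feasibility is decidable in polynomial time \cite{bertsimas97}; the algorithm reports ``the 2PSAT instance is satisfiable'' precisely when it is feasible.

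For correctness, if the program has a feasible point $y^{*}$, let $\lambda^{*}$ be the $2$nd-order labelling $\lambda^{*}(v)=y^{*}_v$. It satisfies $C$, hence corresponds to a probability function $P$, so $P(F)=\lambda^{*}(F)$ for every formula $F$ with at most two arguments; in particular $P(S_j)=\lambda^{*}(S_j)=\sum_{v\models S_j}y^{*}_v=p_j$ for every $j$, so $P$ solves the 2PSAT instance. Conversely, if a probability function $P$ solves the instance, let $\lambda^{*}$ be the $2$nd-order labelling induced by $P$, $\lambda^{*}(A=b_1,B=b_2)=P(A^{b_1}\wedge B^{b_2})$; additivity gives $\lambda^{*}(F)=P(F)$ for every two-argument formula $F$, so $\lambda^{*}$ corresponds to $P$ and therefore satisfies $C$, and $\sum_{v\models S_j}\lambda^{*}(v)=P(S_j)=p_j$ shows that $\lambda^{*}$ is feasible. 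Hence feasibility of the program is equivalent to satisfiability of the instance, 2PSAT lies in polynomial time, and $P=NP$, contrary to hypothesis.

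The main obstacle is that the argument uses $C$ in both directions inside one linear program: ``feasible $\Rightarrow$ realizable labelling $\Rightarrow$ probability function'' relies on $C$ being sound, while ``distribution $\Rightarrow$ its $2$-marginal labelling satisfies $C$'' relies on $C$ being complete, and it is exactly this two-sided use that a merely sound $C$ (or a complete-but-not-sound relaxation such as normalization plus marginal consistency) fails to support — which is why the statement has to be read as a genuine characterization. A secondary point the full proof must pin down is the meaning of $\lambda(F)$ for formulas $F$ with fewer than $k$ arguments; normalizing the 2PSAT instance so that every constraint involves exactly two (in general, exactly $k$) arguments makes every $\lambda(F)$ appearing in the program an honest linear functional of the labelling and keeps the correspondence between $P$ and its $k$-marginal labelling well behaved.
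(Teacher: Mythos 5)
Your proof is correct and follows essentially the same route as the paper: a polynomial-time reduction from 2PSAT to a polynomial-size linear program over the $2^k\binom{|\args|}{k}$ labelling variables together with the constraints of $C$, with correctness resting on the hypothesised correspondence between labellings satisfying $C$ and probability functions in both directions. The points you flag explicitly --- reading $C$ as a genuine characterization (the paper's proof likewise needs the labelling induced by a satisfying probability function to satisfy $C$), normalizing constraints to exactly two variables, and using a pure feasibility LP instead of the paper's slack-minimization objective --- are refinements of the same argument, not a different approach.
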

\begin{proof}
We show the contrapositive by showing that under the assumption that $C$ has polynomial size, 
$2PSAT$ can be solved in polynomial time. 
Since 2SAT is NP-hard \cite{georgakopoulos1988}, this implies $P = NP$.

For the sake of contradiction, assume that such a set $C$ exists. 
A $2PSAT$ instance consists of $n$ propositional atoms $\alpha$
and $m$ propositional 2-clauses of the form 
$\pi(\alpha_{i,1}^{b_{i,1}} \vee \alpha_{i,2}^{b_{i,1}}) = p$.
The problem is to decide whether the clauses can be satisfied by
a probability function \cite{georgakopoulos1988}.
Clearly, we can introduce an argument $A$ for every propositional atom $\alpha$ and 
represent every statement $\pi(\alpha_{i,1}^{b_{i,1}} \vee \alpha_{i,2}^{b_{i,2}}) = p_i$
with a linear constraint $\pi(A_{i,1}^{b_{i,1}} \vee A_{i,2}^{b_{i,2}}) = p_i$.
Then the set of linear constraints can be satisfied if and
only if the 2PSAT instance can be satisfied. 

In order to decide satisfiability, we can build up a linear optimization problem 
over $k$-th order labellings similar to the proof of Proposition \ref{prop_sat_polynomial_time}.
The number of optimization variables is $2^k \cdot \binom{|\args|}{k}$ (number of probabilities
stored in the probability labelling) plus $2m$ (two slack variables for every clause).
The probability and slack variables are again non-negative.
We add $m$ constraints $\pi(A_{i,1}^{b_{i,1}} \vee A_{i,2}^{b_{i,2}}) = p_i + s^+_i - s^-_i$.
The term $(s^+_i - s^-_i)$ guarantees that the constraint can be satisfied. Note that the original
constraint is satisfied iff $s^+_i = s^-_i = 0$ (we will again minimize the slack variables). 
Furthermore, we need the constraints from $C$. The objective function is
$\sum_{i=1}^m (s^+_i + s^-_i)$. Since $C$ is supposed to be of polynomial size,
the linear optimization problem has polynomial size. Therefore, it can be solved in polynomial time.

Similar to the proof of Proposition \ref{prop_sat_polynomial_time}, we can check that the
clauses are satisfiable by a $k$-th order labelling if and only if the linear optimization
problem has minimum $0$. If the clauses are satisfiable, consider a probability function $P$
that satisfies the constraints. We can obtain a corresponding $k$-th order labelling $\lambda_P$ by assigning
probability $P(\bigwedge_{k=1}^m A_i^{b_i})$ to the assignment $(A_1 = b_1, \dots,A_k = b_k) \in \valuations_k$. Since $P$ satisfies the constraints, $\lambda_P$ clearly satisfies them as well.
Therefore, we can let $s^+_i = s^-_i = 0$  for $i=1,\dots,m$ and the minimum is $0$.

Conversely, if the minimum is $0$, there is a $k$-th order labelling $\lambda$ that satisfies
$\lambda(A_{i,1}^{b_{i,1}} \vee A_{i,2}^{b_{i,2}}) = p_i + 0 + 0 = p_i$  for $i=1,\dots,m$.
That is, it satisfies all clauses. It also satisfies $C$ and, therefore, corresponds to a probability
function $P$. Hence, $P$ satisfies the clauses and so the clauses are satisfiable. 

Hence, under the assumption that $C$ has polynomial size, $2PSAT$ can be solved in polynomial time and therefore $P = NP$.
\end{proof}
Hence, assuming $P \neq NP$, our previous idea cannot be generalized to arbitrary non-atomic formulas. 
However, there may still be interesting special cases that can be solved efficiently in other ways. 
We will look at some other fragments in the following sections.

\section{Complex Constraints}
\label{sec_constraints}

In this section, we look at how far we can extend the expressiveness of our constraint language
when keeping the query language atomic.
Unfortunately, there are strong limitations.
Even if we only allow constraining the probability of the disjunction of two literals,
the satisfiability problem becomes intractable.
To show this,
we define a \emph{linear 2DN constraint} as an expression of the form 
$\sum_{i=1}^n c_i \cdot \pi(A_{i,1}^{b_{i,1}} \vee A_{i,2}^{b_{i,2}}) \leq c_0$,
where $c_i \in \mathbb{R}$, $A_{i,j} \in \args$ and $b_{i,j} \in \{0,1\}$ 
and we let $A_{i,j}^0 := \neg A_{i,j}$ and $A_{i,j}^1 := A_{i,j}$.
As before, we say that a probability function $P$ satisfies such a constraint iff
$\sum_{i=1}^n c_i \cdot P(A_{i,1}^{b_{i,1}} \vee A_{i,2}^{b_{i,1}}) \leq c_0$.
\begin{proposition}
\label{prop_sat_2DN_constraints}
The satisfiability problem for Linear 2DN Constraints is NP-complete.
\end{proposition}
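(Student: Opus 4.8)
The plan is to show membership in NP and NP-hardness separately. For membership, I would argue as in the atomic case: although a satisfying probability function lives over $2^{|\args|}$ worlds, satisfaction of a set of linear 2DN constraints depends only on the probabilities $P(A_{i,1}^{b_{i,1}} \vee A_{i,2}^{b_{i,2}})$ of the polynomially many distinct disjunctions appearing in the constraints. So a polynomial-size certificate is a vector of rational values, one per distinct literal-disjunction, together with enough marginal/normalization bookkeeping to witness that these values are simultaneously realizable by some probability function; alternatively — and more cleanly — one guesses the (polynomially many, by LP theory) support worlds of a vertex solution of the associated LP and checks feasibility in polynomial time. Either way, membership in NP is routine and I would keep it brief.

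The substance is NP-hardness, which I would get by reduction from (classical) 3SAT or from a graph problem such as 3-colorability. The key observation is that a single linear 2DN constraint can force a $0/1$ behaviour: the constraint $\pi(A \vee \neg A) \le 1$ is trivial, but by combining constraints we can pin probabilities to $\{0,1\}$. Concretely, I expect the reduction to introduce one argument $A_\alpha$ per propositional variable $\alpha$, add constraints forcing each $P(A_\alpha) \in \{0,1\}$ (e.g.\ constraints of the form $\pi(A_\alpha \vee A_\beta) \le p$ combined across a gadget, or exploiting that $P(A \vee B) = P(A) + P(B) - P(A\wedge B)$ together with bounds so that the only consistent solutions are integral), and then encode each clause $\ell_1 \vee \ell_2 \vee \ell_3$ using its 2-literal sub-disjunctions: since $P(\ell_1 \vee \ell_2 \vee \ell_3) \ge P(\ell_i \vee \ell_j)$ and, when all probabilities are integral, $P(\ell_i \vee \ell_j) = \max\{P(\ell_i), P(\ell_j)\}$, a clause is satisfied iff at least one of the three pairwise disjunctions has probability $1$, which is expressible by a linear 2DN constraint like $\pi(\ell_1 \vee \ell_2) + \pi(\ell_2 \vee \ell_3) + \pi(\ell_1 \vee \ell_3) \ge 1$. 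One then shows the constraint set is satisfiable iff the formula is.

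The main obstacle is the integrality gadget: linear constraints over probabilities naturally have fractional solutions, so I need a set of 2DN constraints whose \emph{only} solutions assign each variable-argument a probability in $\{0,1\}$, and crucially this must be done using \emph{disjunctions of at most two literals} with no access to atomic constraints $\pi(A)$ in isolation (note a 2DN constraint with $A_{i,1}=A_{i,2}$ degenerates to an atomic one, so I should check whether the intended reduction is allowed to use that degeneracy — if so, integrality is easy via $\pi(A) = $ values and the clause encoding above; if the problem is meant to forbid it, a genuine two-variable gadget forcing $P(A_\alpha \wedge \neg A_\beta) \in \{0, \dots\}$ patterns is needed). I would first settle this modelling question, then verify the two directions of the reduction: (i) a truth assignment yields a probability function (take the Dirac function on the world consisting of the true variables, extended by the construction of Lemma~\ref{lemma_one_to_one_correspondence} applied to a $\{0,1\}$-labelling) satisfying all constraints; (ii) any satisfying probability function, by the integrality gadget, induces a truth assignment, and the clause constraints guarantee every clause is satisfied. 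The correctness of step (ii) is where the argument must be airtight, since it relies on the integrality forced by the gadget propagating through the $\max$-behaviour of disjunction probabilities.
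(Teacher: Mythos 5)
Your membership argument is fine and essentially the paper's: one invokes LP theory to get a satisfying probability function whose support has size at most $|\constraints|+1$, and the list of support worlds with their probabilities is a polynomial certificate.

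The hardness half, however, has a genuine gap, and it is exactly at the point you flagged: the integrality gadget cannot exist. The set of probability functions satisfying a set of linear 2DN constraints is convex, because $P \mapsto P(F)$ is linear in $P$ for every fixed formula $F$. So if your gadget admits one satisfying $P$ with $P(A_\alpha)=0$ and another with $P(A_\alpha)=1$ (which it must, since the reduction has to leave both truth values available for every variable), then every convex combination of the two is also satisfying and realizes every intermediate value of $P(A_\alpha)$. Hence no set of linear constraints --- 2DN, atomic, or otherwise --- can force $P(A_\alpha)\in\{0,1\}$ while permitting both values, and with integrality gone your clause constraint $\pi(\ell_1\vee\ell_2)+\pi(\ell_2\vee\ell_3)+\pi(\ell_1\vee\ell_3)\geq 1$ is satisfied by fractional assignments (e.g.\ each disjunction at probability $1/3$) that correspond to no truth assignment, so direction (ii) of your reduction fails. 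The paper avoids this entirely by not reducing from classical 3SAT at all: it reduces from 2PSAT, the probabilistic satisfiability problem for 2-clauses, which is already NP-complete \cite{georgakopoulos1988}; each 2PSAT statement $\pi(\alpha_{i,1}^{b_{i,1}}\vee\alpha_{i,2}^{b_{i,2}})=p$ is verbatim a linear 2DN constraint, so the reduction is a direct translation with no gadgets. In other words, the combinatorial hardness is inherited from the known probabilistic result rather than re-proved via Boolean encodings; if you insist on starting from a classical NP-complete problem you would in effect have to redo the nontrivial construction of \cite{georgakopoulos1988}, and any attempt that hinges on forcing integral probabilities through linear constraints is doomed by the convexity observation above.
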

\begin{proof}
For membership, we need a result from Linear Programming theory.
Among the optimal solutions of an N-dimensional linear program, there must be one
that satisfies $N$ constraints with equality \cite{bertsimas97}.
Here, our constraints consist of $|\constraints|$ semantical constraints from $\constraints$,
a normalization constraint that makes sure that the probabilities sum to $1$
and $2^n$ non-negativity constraints for the probabilities of possible worlds.
Therefore, if $\constraints$ is satisfiable, there must be an optimal solution $P^*$ that satisfies 
at least $2^n - (|\constraints| + 1)$ non-negativity constraints 
with equality. 
That is, $P^*$ assigns non-zero probability to at most $|\constraints| + 1$ possible worlds.
Let $W = \{w \mid w \in 2^\args, P^*(w) > 0\}$.
Then $|W| \leq |\constraints| + 1$, that is,
$W$ has polynomial size. 
Furthermore, for arbitrary formulas $F$ over $\args$, 
$P^*(F) = \sum_{w \in 2^\args, w \models F} P^*(w) = \sum_{w \in W, w \models F} P^*(w)$.
Hence, the set of pairs $\{(w, P^*(w)) \mid w \in W\}$
provides a certificate of polynomial size such that 
checking the constraints and $P^*(Q)>0$ can be done in polynomial
time. 

For hardness, we can give a polynomial-time reduction from 2PSAT,
the problem of deciding whether a set of probability statements of the form 
$\pi(\alpha_{i,1}^{b_{i,1}} \vee \alpha_{i,2}^{b_{i,1}}) = p$ over propositional 2-clauses
is satisfiable. As shown in \cite{georgakopoulos1988}, 2PSAT is NP-complete.
We can introduce an argument $A$ for every propositional atom $\alpha$ 
and represent every statement $\pi(\alpha_{i,1}^{b_{i,1}} \vee \alpha_{i,2}^{b_{i,1}}) = p$
with a linear 2D constraint $\pi(A_{i,1}^{b_{i,1}} \vee A_{i,2}^{b_{i,1}}) = p$.
Then, clearly, the set of linear 2D constraints can be satisfied if and
only if the 2PSAT instance can be satisfied.
\end{proof}
The problem does not get significantly easier when considering conjunction instead of disjunction.
We define a \emph{linear 2CN constraint} as an expression of the form 
$\sum_{i=1}^n c_i \cdot \pi(A_{i,1}^{b_{i,1}} \wedge A_{i,2}^{b_{i,2}}) \leq c_0$
and say that $P$ satisfies such a constraint iff
$\sum_{i=1}^n c_i \cdot P(A_{i,1}^{b_{i,1}} \wedge A_{i,2}^{b_{i,1}}) \leq c_0$.
\begin{proposition}
The satisfiability problem for Linear 2CN Constraints is NP-complete.
\end{proposition}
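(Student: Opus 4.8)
The plan is to mirror the proof of Proposition~\ref{prop_sat_2DN_constraints} almost verbatim, since the only change is replacing disjunctions of two literals by conjunctions of two literals. For membership in NP, I would again appeal to the basic-solution result from linear programming: writing the satisfiability question as a linear program over the $2^n$ world-probabilities with the $m$ semantical constraints plus one normalization constraint plus $2^n$ non-negativity constraints, any optimal vertex solution $P^*$ satisfies at least $2^n - (|\constraints|+1)$ of the non-negativity constraints with equality, hence is supported on a set $W$ of at most $|\constraints|+1$ worlds. Since for any formula $F$ we have $P^*(F) = \sum_{w \in W, w \models F} P^*(w)$, the pairs $\{(w,P^*(w)) \mid w \in W\}$ form a polynomial-size certificate on which all the linear 2CN constraints can be checked in polynomial time. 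This part is genuinely identical to the 2DN case and requires no new idea.

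For hardness, the cleanest route is to reduce directly from the 2PSAT satisfiability problem again, but now I must turn disjunctive 2-clauses into conjunctive constraints. The key observation is De Morgan: for literals $\ell_1,\ell_2$ we have $P(\ell_1 \vee \ell_2) = 1 - P(\neg\ell_1 \wedge \neg\ell_2)$. So a 2PSAT statement $\pi(\alpha_{i,1}^{b_{i,1}} \vee \alpha_{i,2}^{b_{i,2}}) = p_i$ is equivalent, for a probability function, to $\pi(A_{i,1}^{1-b_{i,1}} \wedge A_{i,2}^{1-b_{i,2}}) = 1 - p_i$, which is a legal linear 2CN constraint (a single term with coefficient $1$ and constant $1-p_i$, using the $\leq$/$\geq$ pair to encode equality as noted earlier in the paper). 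Introducing one argument $A$ per propositional atom $\alpha$, the resulting set of linear 2CN constraints is satisfiable by a probability function over $\args$ if and only if the original 2PSAT instance is satisfiable, because the translation preserves the probability of each literal-pair formula under every probability function. Since 2PSAT is NP-hard~\cite{georgakopoulos1988} and the reduction is clearly polynomial, NP-hardness follows; combined with membership, the problem is NP-complete.

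I do not expect a serious obstacle here: the membership argument is a direct reuse of the linear-programming vertex argument from the previous proposition, and the hardness argument is a one-line De Morgan rewrite on top of the same 2PSAT reduction. The only point worth a sentence of care is checking that the rewrite $P(\ell_1\vee\ell_2)=1-P(\neg\ell_1\wedge\neg\ell_2)$ holds for \emph{every} probability function (it follows from item~1 of Lemma~\ref{lemma_computation_rules} applied twice, or directly from the world-sum definition), so that satisfiability is genuinely preserved in both directions of the reduction.

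\begin{proof}
For membership in NP, we proceed exactly as in the proof of Proposition~\ref{prop_sat_2DN_constraints}. View satisfiability as a linear program over the $2^n$ probabilities of possible worlds, with the $|\constraints|$ linear 2CN constraints, one normalization constraint, and $2^n$ non-negativity constraints. By the same result from linear programming, if $\constraints$ is satisfiable there is an optimal solution $P^*$ satisfying at least $2^n - (|\constraints|+1)$ non-negativity constraints with equality, so $W = \{w \in 2^\args \mid P^*(w) > 0\}$ has size at most $|\constraints|+1$. For any formula $F$, $P^*(F) = \sum_{w \in 2^\args, w \models F} P^*(w) = \sum_{w \in W, w \models F} P^*(w)$, so $\{(w, P^*(w)) \mid w \in W\}$ is a polynomial-size certificate on which all linear 2CN constraints can be verified in polynomial time.

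For hardness, we reduce from 2PSAT, which is NP-complete \cite{georgakopoulos1988}. A 2PSAT instance is a set of statements $\pi(\alpha_{i,1}^{b_{i,1}} \vee \alpha_{i,2}^{b_{i,2}}) = p_i$, and the question is whether it is satisfiable by a probability function. For every probability function $P$ and all literals, $P(\ell_1 \vee \ell_2) = 1 - P(\neg\ell_1 \wedge \neg\ell_2)$, which follows from item~1 of Lemma~\ref{lemma_computation_rules} (or directly from the world-sum definition, since exactly the worlds that falsify $\ell_1 \vee \ell_2$ are those that satisfy $\neg\ell_1 \wedge \neg\ell_2$). Hence, introducing an argument $A$ for each propositional atom $\alpha$, the statement $\pi(\alpha_{i,1}^{b_{i,1}} \vee \alpha_{i,2}^{b_{i,2}}) = p_i$ is satisfied by a probability function precisely when the linear 2CN constraint $\pi(A_{i,1}^{1-b_{i,1}} \wedge A_{i,2}^{1-b_{i,2}}) = 1 - p_i$ is. Therefore the constructed set of linear 2CN constraints is satisfiable if and only if the 2PSAT instance is. Since the reduction is computable in polynomial time, the satisfiability problem for linear 2CN constraints is NP-hard, and together with membership it is NP-complete.
\end{proof}
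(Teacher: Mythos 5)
Your proof is correct, but your hardness argument takes a different route from the paper's. The paper reduces from the satisfiability problem for linear 2DN constraints (its preceding Proposition~\ref{prop_sat_2DN_constraints}), rewriting each disjunction $A_{i,1}^{b_{i,1}} \vee A_{i,2}^{b_{i,2}}$ as a disjunction of three mutually exclusive conjunctions of two literals, so that the probability of the disjunctive term becomes a sum of three conjunctive probability terms; this turns any 2DN constraint set into an equisatisfiable 2CN constraint set with only a constant-factor blowup, and in particular shows the two constraint fragments are interreducible at the level of whole linear constraints. You instead reduce directly from 2PSAT using complementation: $P(\ell_1 \vee \ell_2) = 1 - P(\neg\ell_1 \wedge \neg\ell_2)$ for every probability function, so each 2PSAT statement becomes a single 2CN equality constraint $\pi(A_{i,1}^{1-b_{i,1}} \wedge A_{i,2}^{1-b_{i,2}}) = 1 - p_i$ (encoded as a $\leq$/$\geq$ pair, as the paper allows). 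Your route is shorter and avoids the three-way decomposition, at the cost of not exhibiting the general 2DN-to-2CN translation; note only that the complement rule is cleanest justified directly from the world-sum definition (every world satisfies exactly one of $F$ and $\neg F$), since item~1 of Lemma~\ref{lemma_computation_rules} gives it only after inserting a tautology and using that logically equivalent formulas have equal probability, which you do acknowledge. Your membership argument is the same linear-programming vertex/support argument the paper invokes by reference, merely spelled out.
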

\begin{proof}
Membership follows as in the previous proposition.

For hardness, we can give a polynomial-time reduction from satisfiability
of linear 2DN constraints that we considered before.
Consider an arbitrary linear 2DN constraint $\sum_{i=1}^n c_i \cdot \pi(A_{i,1}^{b_{i,1}} \vee A_{i,2}^{b_{i,2}}) \leq c_0$.
Notice that every formula $A_{i,1}^{b_{i,1}} \vee A_{i,2}^{b_{i,2}}$ can be equivalently expressed as a disjunction of three exclusive conjunctions of length $2$.
For example $A \vee \neg B \equiv (A \wedge \neg B) \vee (\neg A \wedge \neg B) \vee (A \wedge B)$.
Since these conjunctions cannot be satisfied simultaneously, 
we have $P\big((A \wedge \neg B) \vee (\neg A \wedge \neg B) \vee (A \wedge B)\big)
= P(A \wedge \neg B) + P(\neg A \wedge \neg B) + P(A \wedge B)$.
In general, every linear 2DN constraint $\sum_{i=1}^n c_i \cdot \pi(A_{i,1}^{b_{i,1}} \vee A_{i,2}^{b_{i,2}}) \leq c_0$ can be equivalently represented by
a linear 2CN constraint
$\sum_{i=1}^n c_i \cdot \big(\sum_{i=1}^3 \pi(C_i) \big) ) \leq c_0$
where the $C_i$ are exclusive conjunctions of two literals chosen as before to satisfy
$(A_{i,1}^{b_{i,1}} \vee A_{i,2}^{b_{i,2}}) \equiv \bigvee_{i=1}^3 C_i$.
The number of constraints remains unchanged and their size changes only by a constant factor.
In particular, a set of linear 2DN constraints is satisfiable if and only if the corresponding
set of linear 2CN constraints is satisfiable.
\end{proof}
So talking about the probability of formulas in constraints is inherently difficult.
However, instead of allowing logical connectives in probability statements, we could 
consider logical connections of constraints as considered in \cite{HunterPT2018Arxiv}.
Note that connecting constraints conjunctively does not add anything semantically.
This is because there is no difference between adding two constraints or their
conjunction to a knowledge base when the usual interpretation of conjunction is used.
Adding negation basically means allowing for strict inequalities. Negation alone does not
add any additional difficulties and the problem can be reduced to the case without negation
with constant cost \cite{HunterPolbergPotyka18}.
The most interesting case is allowing for connecting constraints disjunctively.
We define a \emph{2D linear atomic constraint} as an expression of the form 
$l_1 \vee l_2$,
where $l_1, l_2$ are linear atomic constraints.
We say that a probability function $P$ satisfies such a constraint iff
it satisfies $l_1$ or $l_2$.
Unfortunately, the satisfiability problem for 2D linear atomic constraints is again NP-hard.
\begin{proposition}
The satisfiability problem for 2D Linear Atomic Constraints is NP-complete.

In particular, the problem remains NP-hard 
even when the constraints are further restricted to the form
$\big(\sum_{i=1}^2 c_i \cdot \pi(A_i) \leq c_0\big) \vee \big(\sum_{i=1}^2 c_i \cdot \pi(B_i) \leq c_0\big)$, that is, even when the linear atomic constraints in the disjunction can contain at most $2$ probability terms.
\end{proposition}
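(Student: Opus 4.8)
The plan is to prove NP-membership by reducing to the polynomial-time-solvable atomic case via a guessing step, and NP-hardness by a reduction from graph $3$-colorability whose gadgets use only disjuncts with at most two probability terms.

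For membership, note that a probability function $P$ satisfies a set $\{l^{(j)}_1 \vee l^{(j)}_2 \mid 1 \le j \le m\}$ of 2D linear atomic constraints if and only if there is a selection $\sigma\colon\{1,\dots,m\}\to\{1,2\}$ such that $P$ satisfies the plain set of linear atomic constraints $\{l^{(j)}_{\sigma(j)} \mid 1 \le j \le m\}$. An NP-algorithm therefore guesses the $m$-bit string $\sigma$ and then checks satisfiability of the resulting atomic constraint set, which by Theorem \ref{prop_sat_polynomial_time} can be done in polynomial time. (Alternatively, the small-support certificate from the proof of Proposition \ref{prop_sat_2DN_constraints} applies once $\sigma$ is fixed.) Hence the problem lies in NP.

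For hardness, the idea is to encode three colors by the three probability values $0,\tfrac12,1$. Given a graph $G=(V,E)$, introduce one argument $A_v$ per vertex $v\in V$. For each $v$ add the two constraints $\bigl(\pi(A_v)\le 0\bigr)\vee\bigl(\pi(A_v)\ge\tfrac12\bigr)$ and $\bigl(\pi(A_v)\le\tfrac12\bigr)\vee\bigl(\pi(A_v)\ge 1\bigr)$; since probabilities lie in $[0,1]$ anyway, these force $P(A_v)\in\{0,\tfrac12,1\}$, and each disjunct mentions a single argument. For each edge $(u,v)\in E$ add the constraint $\bigl(\pi(A_u)-\pi(A_v)\ge\tfrac12\bigr)\vee\bigl(\pi(A_v)-\pi(A_u)\ge\tfrac12\bigr)$; because $P(A_u),P(A_v)\in\{0,\tfrac12,1\}$, this is satisfied exactly when $P(A_u)\neq P(A_v)$, and each disjunct mentions two arguments. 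Using Lemma \ref{lemma_one_to_one_correspondence} (every labelling extends to a probability function) together with Lemma \ref{lemma_constraint_equivalence}, a probability function satisfies all these constraints if and only if the map $v\mapsto P(A_v)\in\{0,\tfrac12,1\}$ is a proper $3$-coloring of $G$. Thus the constructed set of $2|V|+|E|$ 2D linear atomic constraints, each disjunct of which contains at most two probability terms, is satisfiable if and only if $G$ is $3$-colorable, which is NP-complete; this also establishes the refined hardness claim. All $\ge$-inequalities are rewritten with $\le$ as noted after the definition of linear atomic constraints.

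The step I expect to be the main obstacle is the hardness gadget in its restricted ($\le 2$ terms) form: a disjunction of two two-variable linear inequalities is strictly more expressive than a propositional $2$-clause — a pure clause encoding cannot give NP-hardness, since 2SAT is in P — so one must exploit this extra power. The key observation is that a single probability term can be pinned to three values by two disjunctive one-variable constraints, and that the ``not equal'' relation between two such three-valued terms is itself a disjunction of two two-variable inequalities; this is what lets us capture $3$-colorability without ever introducing a constraint with three probability terms. A naive encoding of $k$-literal clauses would need $k$ probability terms per constraint and would therefore only yield the unrestricted NP-hardness statement rather than the sharper one asserted here.
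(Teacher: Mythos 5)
Your proof is correct, but it takes a genuinely different route from the paper on both halves. For membership, the paper exhibits a satisfying labelling as a polynomial-size certificate, whereas you guess which disjunct of each constraint to enforce and then invoke the polynomial-time algorithm of Theorem~\ref{prop_sat_polynomial_time} on the resulting atomic set; your version has the minor advantage that the verifier is an LP solve, so you never need to argue that a satisfying labelling admits a polynomially-sized rational representation. For hardness, the paper reduces from 3SAT, introducing three auxiliary arguments $X_1,X_2,X_3$ per clause, constraints $\bigl(\pi(A_i)=b_i\bigr)\vee\bigl(\pi(X_i)=1\bigr)$, and a final disjunctive constraint enforcing that at most two literals of the clause are falsified; the correctness argument then reasons about worlds of non-zero probability. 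You instead reduce from graph $3$-colorability, pinning each $P(A_v)$ to $\{0,\tfrac12,1\}$ with two one-variable disjunctive constraints (the intersection $(\{0\}\cup[\tfrac12,1])\cap([0,\tfrac12]\cup\{1\})$ is indeed $\{0,\tfrac12,1\}$) and expressing disequality of adjacent vertices by a disjunction of two two-term inequalities; correctness then follows directly at the level of labellings via Lemmas~\ref{lemma_one_to_one_correspondence} and~\ref{lemma_constraint_equivalence}, with no auxiliary arguments and no reasoning about individual possible worlds. Both reductions respect the restricted form (each disjunct has at most two probability terms), so your argument also yields the sharper claim; your observation that a pure clause encoding could not work because 2SAT is in P is exactly the right sanity check, and it is the same obstacle the paper's gadget circumvents with its auxiliary-argument counting trick.
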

\begin{proof}
Membership follows again from noticing that a labelling that satisfies the constraints is a certificate
that can be checked in polynomial time.

For hardness, we give a polynomial-time reduction from 3SAT
to satisfiability of 2D Linear Atomic Constraints.
As before, for every propositional atom, we introduce a corresponding argument.
Consider a clause $(\alpha^{b_1}_1 \vee \alpha^{b_2}_2 \vee \alpha^{b_3}_3)$.
We introduce three additional auxiliary arguments $X_1, X_2, X_3$ and encode the clause by four 
2D linear atomic constraints. We use the constraints 
$\big(\pi(A_i) = b_i\big) \vee \big(\pi(X_i)=1\big)$ for $i=1,\dots,3$ 
and $\big(\pi(X_1) + \pi(X_2) \leq 1\big) \vee \big(\pi(X_2) + \pi(X_3) \leq 1\big)$.	
Notice that $\pi(A) = 1$ is equivalent to $-\pi(A) \leq -1$ and $\pi(A) = 0$ is equivalent to $\pi(A) \leq 0$.
The constraint $\pi(X_i)=1$ must be satisfied if the $i$-th literal is not satisfied. 
The last constraint expresses
that at most two literals are allowed to be falsified. If all three literals are falsified, the last
constraint is not satisfied. If the first or second literal are satisfied, the first atom in the disjunction will be satisfied, if the third literal is satisfied, the second atom will be satisfied. 
Our reduction introduces $3m$ new arguments and $3m$ additional constraints, 
so the size is polynomial.

If $F$ is satisfiable, then there is a possible world $w$ (interpretation) that satisfies $F$.
Consider the probability function $P_w$ that assigns probability $1$ to $w$ and $0$ to all other worlds.
Let $L$ denote the probability labelling corresponding to $P_w$.
We extend $L$ to a probability labelling $L'$ over the $n+3m$ arguments.
For every clause $(A^{b_1}_1 \vee A^{b_2}_2 \vee A^{b_3}_3)$, $w$ satisfies one literal $A^{b_i}_i$
and we set the corresponding auxilary argument $X_i$ to $0$ and the other two to $1$. 
Then the 2D linear atomic constraints are satisfied by $L'$ and the corresponding probability
function $P_{L'}$ satisfies the constraint as well as shown before. Hence, the 
2D linear atomic constraints are satisfiable.

Conversely, if all 2D linear atomic constraints are satisfied by a probability function $P$,
then every world $w$ with $P(w) > 0$ must satisfy $F$ (strictly speaking, $w$ also interprets the auxiliary
arguments, but those can just be ignored). For the sake of contradiction, 
assume that this is not the case. That is, there is a world $w$ with $P(w) > 0$ that does not satisfy $F$.
Then there is a clause $(A^{b_1}_1 \vee A^{b_2}_2 \vee A^{b_3}_3)$ in $F$ such that $w$ 
satisfies neither $A^{b_1}_1$ nor $A^{b_2}_2$ nor $A^{b_3}_3$.
If $b_i=1$, then $P(A_i) = \sum_{v \in \args, A_i \in v} P(v) \leq  \sum_{v \in \args\setminus w} P(v)
= 1 - P(w) < 1$ and if $b_i=0$, then $P(A_i) \geq P(w) > 0$. 
Then the constraints $\big(\pi(A_i) = b_i\big) \vee \big(\pi(X_i)=1\big)$ can only be satisfied if
$P(X_i)=1$ for $i=1,\dots,3$. But then $P(X_1) + P(X_2) =  P(X_2) + P(X_3) = 2$
and the constraint 
$\big(\pi(X_1) + \pi(X_2) \leq 1\big) \vee \big(\pi(X_2) + \pi(X_3) \leq 1\big)$
is violated,
which contradicts our assumption that $P$ satisfies the constraints.
Hence, indeed every world $w$ with $P(w) > 0$ must satisfy $F$ and since there must be at least one world
with non-zero probability (otherwise, $P$ cannot be a probability function), $F$ is satisfiable.
\end{proof}
It seems that our constraint language cannot be extended significantly without loosing our polynomial
runtime guarantees.
However, there is still one interesting special case left that we did not exclude so far.
It may be possible to extend our fragment to statements of the form 
$\big(\pi(A_1) \leq c_1\big) \vee \big(\pi(A_2) \leq c_2\big)$, where every linear atomic constraint can only 
contain a single probability term. 
This would allow making conditional statements like in the rationality property \cite{hunter2014probabilistic}:
\begin{description}
 \item[RAT:] $P$ is called \emph{rational} if for all $A, B \in \args$ with $(A, B) \in \attacks$, we have $P(A) > 0.5$ implies $P(B) \leq 0.5$.
\end{description}
We may reuse ideas for 2SAT in order to handle such constraints efficiently.
However, we currently cannot say for certain if this is possible in polynomial time and leave this question for future work.

\section{Complex Queries}
\label{sec_queries}

As we saw in the previous section, our constraint language cannot be extended significantly without
loosing polynomial runtime guarantees.
We will now conduct a similar analysis for the query language.
That is, we investigate how far we can extend the expressiveness of our query language
when keeping the constraint language atomic.

Unfortunately,
there is again no efficient way to answer arbitrarily complex queries
because, otherwise, the entailment problem could be used
to solve the propositional satisfiability problem.
In order to make this precise, we define a \emph{3CNF-Query} as a  formula 
$Q = \bigwedge_{i=1}^m \big(\bigvee_{j=1}^3 A_{i,j}^{b_{i,j}}\big)$ over arguments, where again 
$b_{i,j} \in \{0,1\}$, $A_{i,j}^0 := \neg A_i$ and $A_{i,j}^1 = A_i$.
Just deciding whether the upper probability bound for a 3CNF-Query is non-zero is NP-hard already. 
\begin{proposition}
\label{prop_ent_general_queries}
Let $\constraints$ be a satisfiable set of linear atomic constraints over $\args = \{A_1,\dots, A_n\}$ 
and let $Q$ be a 3CNF-query.
Then the following problem is NP-complete: decide whether the upper bound on the probability of $Q$ among the probability functions that satisfy $C$ is non-zero.
\end{proposition}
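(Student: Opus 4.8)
The plan is to establish NP-completeness in two parts. For membership in NP, I would reuse exactly the argument sketched in the proof of Proposition~\ref{prop_sat_2DN_constraints}: by linear programming theory, if there is any probability function $P$ satisfying $\constraints$ with $P(Q) > 0$, then among the optimal solutions of the linear program that maximizes $P(Q)$ subject to $P \models \constraints$, normalization, and non-negativity, there is one $P^*$ with at most $|\constraints| + 2$ possible worlds of positive probability (the number of non-trivial constraints being $|\constraints|$ semantical constraints plus one normalization constraint plus one constraint encoding the objective, so the number of non-negativity constraints that can fail to be tight is bounded polynomially). The set $\{(w, P^*(w)) \mid P^*(w) > 0\}$ is then a polynomial-size certificate: one checks in polynomial time that it is a probability function, that it satisfies each linear atomic constraint in $\constraints$, and that $\sum_{w \models Q} P^*(w) > 0$, where $w \models Q$ is checked in polynomial time by evaluating the 3CNF formula $Q$ at each $w$ in the support. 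Hence the problem is in NP.

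For hardness, I would give a polynomial-time reduction from 3SAT. Given a 3CNF formula $F = \bigwedge_{i=1}^m (\bigvee_{j=1}^3 \alpha_{i,j}^{b_{i,j}})$, introduce one argument $A_i$ for each propositional atom $\alpha_i$, take $Q$ to be the corresponding 3CNF-query $\bigwedge_{i=1}^m (\bigvee_{j=1}^3 A_{i,j}^{b_{i,j}})$, and take $\constraints = \emptyset$ (which is trivially satisfiable). The key observation is that for any probability function $P$, we have $P(Q) > 0$ if and only if there is at least one possible world $w$ with $P(w) > 0$ and $w \models Q$, and such a $w$ is exactly a satisfying assignment of $F$. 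So if $F$ is satisfiable, the point-mass $P_w$ on a satisfying world $w$ gives $P_w(Q) = 1 > 0$; conversely, if the upper bound on $P(Q)$ is non-zero, some $P$ has $P(Q) > 0$, hence some $w$ in its support satisfies $Q$, hence $F$ is satisfiable. The reduction is clearly polynomial.

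The main subtlety — and the only place where one must be slightly careful — is in the membership argument: the basic-feasible-solution bound from linear programming gives the sparsity of an optimal vertex for the LP over the exponentially many world-probabilities, and one has to confirm that maximizing the linear objective $\sum_{w \models Q} \pi(w)$ over that polytope indeed attains its optimum at such a sparse point (which it does, since a bounded LP attains its optimum at a vertex, and here the objective is bounded by $1$). One also needs the trivial remark that once the support $W$ is polynomial, $P^*(Q) = \sum_{w \in W, w \models Q} P^*(w)$, so evaluating the 3CNF query on each $w \in W$ suffices; no summation over all $2^n$ worlds is needed. The hardness direction is essentially immediate and needs no real work beyond noting that taking $\constraints = \emptyset$ is permitted, since the empty constraint set is satisfiable.
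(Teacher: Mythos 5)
Your proposal is correct and follows essentially the same route as the paper: NP-membership via a sparse optimal solution of the linear program (a polynomial-size support serving as a certificate, as in Proposition~\ref{prop_sat_2DN_constraints}), and NP-hardness via the same reduction from 3SAT with $\constraints=\emptyset$ and a point-mass probability function on a satisfying world. The only deviation is your harmless over-count of the support bound ($|\constraints|+2$ rather than $|\constraints|+1$), which does not affect polynomiality.
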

\begin{proof}
For membership, we can construct a polynomial certificate like in the proof of Proposition \ref{prop_sat_2DN_constraints}.

For hardness, we give a polynomial-time reduction from $3SAT$.
Given a propositional 3CNF formula $F$ with $n$ atoms $\alpha_i$,
we introduce corresponding arguments $A_i$.
Let $Q$ be the query obtained from $F$ by replacing $\alpha_i$ with $A_i$ for $i=1,\dots,n$.
We do not add any constraints, so that all $P \in \probDists$ satisfy our constraints trivially.
Then the upper bound on the probability of $Q$ is non-zero iff $F$ is satisfiable. To see this, note that
if $F$ is satisfiable, there is an interpretation that satisfies $F$ and a corresponding 
possible world $w$ that satisfies $Q$. Then the probability function $P_w$ with 
$P_w(w) = 1$ and $P_(w') = 0$ for all other possible worlds gives 
$P_w(Q) = P_w(w) = 1 > 0$.
Conversely, if $F$ is not satisfiable, $Q$ is not satisfiable either and $P(Q) = \sum_{w \in 2^\args, w \models Q} P(w) = 0$ because the sum does not contain any terms for any $P \in \probDists$.
\end{proof}
However, there may be some interesting special cases that can be solved efficiently
if we make additional assumptions.
One case is answering conjunctive queries under the principle of maximum entropy
as we explain in the following. However, the derived probabilities have to be considered
with care as we explain at the end of this section.

\subsection{Answering Conjunctive Queries under the Principle of Maximum Entropy}

When reasoning under the principle of maximum entropy,
we do not consider all probability functions that satisfy our constraints anymore,
but restrict to the one that maximizes entropy \cite{JohnsonShore83,Jaynes83,ParisVencovska90}.
The entropy of a probability function $P$ over $\args$ is defined as
$H(P) = - \sum_{w \in 2^\args} P(w) \cdot \log P(w)$, where $0 \cdot \log 0$ is defined as $0$.
The entropy can be seen as a measure of uncertainty. Indeed, the entropy is always non-negative 
and maximal if $P$ is the uniform distribution.
Therefore, if no constraints are given (no information about the arguments), the principle of maximum entropy is roughly equivalent to the principle of indifference. When accepting the uniform distribution
as the simplest probabilistic model, the principle of maximum entropy can also be seen as a probabilistic
version of Occam's razor.
Intuitively, by maximizing entropy among the probability  functions that satisfy a set of constraints,
we select the probability distribution that adds as little information as possible.
In addition to these intuitive justifications, the principle of optimum entropy has been justified by several characterizations with common-sense properties \cite{JohnsonShore83,Jaynes83,ParisVencovska90,Kern-Isberner00d}.

For a probability labelling $L$ over $n$ arguments $\args = \{A_1\dots, A_n\}$, 
we define its entropy as $H(L) = \sum_{i=1}^n \big(- L(A_i) \cdot \log L(A_i) - (1 - L(A_i)) \cdot \log (1 - L(A_i)) \big)$. 
In the proof of Lemma \ref{lemma_one_to_one_correspondence}, we identified a very special
probability function $P_L$ in the equivalence class corresponding to $L$.
We highlight this probability function in the following corollary.
\begin{corollary}
For every labelling $L \in \probLabs$, there is a probability function $P_L$ 
such that $P_L(w) = \prod_{A \in w} L(A) \cdot \prod_{A \in \args \setminus w} (1- L(A))$
and $r([P_L]) = L$.
\end{corollary}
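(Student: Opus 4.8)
The plan is to simply re-read the construction in the proof of Lemma~\ref{lemma_one_to_one_correspondence} and observe that everything needed for the corollary has already been established there. The corollary is essentially a restatement that isolates one distinguished member of each atomic-equivalence class, so the ``proof'' is a pointer argument rather than fresh work.

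Concretely, first I would fix an arbitrary $L \in \probLabs$ and recall from the surjectivity part of the proof of Lemma~\ref{lemma_one_to_one_correspondence} the explicit definition $P_L(w) = \prod_{A \in w} L(A) \cdot \prod_{A \in \args \setminus w} (1 - L(A))$ for all $w \in 2^\args$. Second, I would note that the same proof already verified, by induction on $|\args|$, that $\sum_{w \in 2^\args} P_L(w) = 1$ and that $P_L(w) \geq 0$ for every $w$ (each factor lies in $[0,1]$), so $P_L$ is a bona fide probability function in $\probDists$. Third, I would invoke the computation carried out at the end of that same proof showing $P_L(B) = L(B)$ for every $B \in \args$; this is exactly the statement that $L_{P_L} = L$, i.e.\ $r([P_L]) = L$ by the definition of $r$. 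That closes the corollary.

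There is really no obstacle here: the corollary is a corollary in the strict sense, extracting and re-advertising a fact buried inside an earlier proof so that it can be cited cleanly when the maximum-entropy analysis needs a canonical representative of an equivalence class. If anything, the only thing worth a sentence is making clear \emph{why} one bothers to state it separately --- namely that among all probability functions in $[P]$ for a given labelling, $P_L$ is the one that factorizes as a product over arguments (the ``independent'' distribution), which is precisely the property that will make the entropy of $P_L$ coincide with the labelling entropy $H(L)$ in the results that follow. So my write-up would be two or three sentences: cite the construction and the two verified facts ($\sum_w P_L(w) = 1$ and $P_L(B) = L(B)$) from the proof of Lemma~\ref{lemma_one_to_one_correspondence}, and conclude.
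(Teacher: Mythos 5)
Your proposal is correct and matches the paper exactly: the paper states this corollary without a separate proof, precisely because the construction of $P_L$, the verification that $\sum_{w \in 2^\args} P_L(w) = 1$, and the identity $P_L(B) = L(B)$ (hence $r([P_L]) = L$) are all contained in the surjectivity part of the proof of Lemma~\ref{lemma_one_to_one_correspondence}. Your pointer argument is the intended justification, so nothing further is needed.
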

As we show next, the entropy $H(L)$ of a labelling $L$ corresponds to the maximum entropy taken in the equivalence class $[P_L]$. In particular, the maximum in $[P_L]$ is always taken by the corresponding probability function $P_L$.
\begin{proposition}
\label{prop_max_entropy_in_equivalence_class}
For every labelling $L \in \probLabs$, the corresponding probability function $P_L$ 
maximizes entropy among all $P' \in [P_L]$.
In particular, $H(P_L) = H(L)$.
\end{proposition}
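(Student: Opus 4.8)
The plan is to prove both claims with a single relative-entropy (Gibbs inequality) computation, after first evaluating $H(P_L)$ directly. Since $\log P_L(w) = \sum_{A \in w} \log L(A) + \sum_{A \in \args \setminus w} \log(1-L(A))$, I would expand $H(P_L) = -\sum_{w \in 2^\args} P_L(w)\log P_L(w)$ and swap the order of summation to obtain $H(P_L) = -\sum_{A \in \args}\big(P_L(A)\log L(A) + (1-P_L(A))\log(1-L(A))\big)$. By Lemma~\ref{lemma_one_to_one_correspondence}, $P_L(A) = L(A)$ for every $A \in \args$, so this is exactly $H(L)$. The boundary cases $L(A)\in\{0,1\}$ are absorbed by the convention $0\cdot\log 0 = 0$: if $L(A)=0$ then $P_L(w)=0$ for every $w$ with $A\in w$, and symmetrically if $L(A)=1$, so these worlds contribute nothing to any sum.

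Next I would show $H(P') \le H(P_L)$ for an arbitrary $P' \in [P_L]$. The crucial point is that the cross term $\sum_{w} P'(w)\log P_L(w)$ depends on $P'$ only through its single-argument marginals. Running the same interchange of summations and using $P'(A) = L(A)$ (because $P' \in [P_L]$) gives $\sum_{w} P'(w)\log P_L(w) = \sum_{A\in\args}\big(L(A)\log L(A) + (1-L(A))\log(1-L(A))\big) = -H(P_L)$. Therefore the relative entropy is $D(P'\,\|\,P_L) = \sum_w P'(w)\log\frac{P'(w)}{P_L(w)} = -H(P') + H(P_L)$, and since $D(P'\,\|\,P_L) \ge 0$ we conclude $H(P') \le H(P_L)$. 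Combined with $H(P_L) = H(L)$ from the first step, this establishes the proposition.

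The one step I would handle with care is the absolute-continuity condition implicit in the relative-entropy expression: whenever $P_L(w) = 0$, the term $P'(w)\log\frac{P'(w)}{P_L(w)}$ must still be well defined and equal $0$. But $P_L(w)=0$ forces either some $A\in w$ with $L(A)=0$ or some $A\notin w$ with $L(A)=1$; in the former case $P'(w) \le P'(A) = L(A) = 0$ and in the latter $P'(w) \le 1 - P'(A) = 1 - L(A) = 0$, so $P'(w)=0$ and the offending terms disappear from every sum, as do the matching terms of $H(P')$ and $H(P_L)$. (An alternative route avoiding relative entropy is to invoke subadditivity of entropy, $H(P') \le \sum_{i=1}^n H(P'(A_i))$ with equality exactly for product distributions; since $\sum_i H(P'(A_i)) = \sum_i H(L(A_i)) = H(L) = H(P_L)$ and $P_L$ is the product distribution attaining equality, the claim follows. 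I would use whichever of the two formulations is more self-contained given the paper's prerequisites.)
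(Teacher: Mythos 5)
Your proposal is correct and follows essentially the same route as the paper's proof: both hinge on rewriting the cross term $\sum_w P'(w)\log P_L(w)$ via the shared atomic marginals $P'(A)=L(A)$ and then invoking non-negativity of the KL-divergence $KL(P',P_L)$, the only difference being that you split the computation into $H(P_L)=H(L)$ plus $H(P')\le H(P_L)$ while the paper derives $H(L)-H(P')=KL(P',P_L)\ge 0$ in one pass. Your explicit treatment of the boundary cases $L(A)\in\{0,1\}$ and of absolute continuity is a careful refinement the paper leaves implicit, but it does not change the argument.
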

\begin{proof}
Consider an arbitrary probability function $P \in [P_L]$
For all formulas $F$, we let $\indicator_F: 2^\args \rightarrow \{0,1\}$ denote the indicator function that yields $1$ iff $w \models F$.
Then we have
\begin{align*}
&H(L) - H(P) \\
&= -\sum_{i=1}^n \big(L(A_i) \cdot \log L(A_i) + (1 - L(A_i)) \cdot \log (1 - L(A_i)) \big) \\
&\quad + \sum_{w \in 2^\args} P(w) \cdot \log P(w) \\
&= -\sum_{i=1}^n \big(P(A_i) \cdot \log L(A_i)  + (1 - P(A_i)) \cdot \log (1 - L(A_i)) \big) \\
&\quad + \sum_{w \in 2^\args} P(w) \cdot \log P(w) \\
&= -\sum_{i=1}^n \big(\sum_{w \in 2^\args} \indicator_{A_i}(w) \cdot P(w) \cdot \log L(A_i) \\
&\quad + \sum_{w \in 2^\args} \indicator_{\neg A_i}(w) \cdot P(w)  \cdot \log (1 - L(A_i)) \big) \\
&\quad + \sum_{w \in 2^\args} P(w) \cdot \log P(w) \\
&= -\sum_{w \in 2^\args} P(w) \sum_{i=1}^n \big( \indicator_{A_i}(w) \log L(A_i) 
+ \indicator_{\neg A_i}(w) \log (1 - L(A_i)) \big) \\
&\quad + \sum_{w \in 2^\args} P(w) \cdot \log P(w) \\
&= -\sum_{w \in 2^\args} P(w) \log \big(\prod_{A_i \in w} L(A_i) \cdot \prod_{A_i \in \args \setminus w} (1- L(A_i)) \big) \\
&\quad + \sum_{w \in 2^\args} P(w) \cdot \log P(w) \\
&= \sum_{w \in 2^\args} P(w) \log \frac{P(w)}{\prod_{A_i \in w} L(A_i) \cdot \prod_{A_i \in \args \setminus w} (1- L(A_i))} \\
&= \sum_{w \in 2^\args} P(w) \log \frac{P(w)}{P_L(w)} = KL(P,P_L) \geq 0,
\end{align*}
where, for the second equality, we used the fact that $P(A) = L(A)$ for all $A \in \args$ and in the last row,
we used the observation that the previous formula corresponds to the KL-divergence between two probability functions that is always non-negative.
Furthermore, the KL-divergence is $0$ if and only if both arguments are equal \cite{yeung2008information}, that is, $KL(P,P_L)=0$ if and only if $P=P_L$. 
Therefore, $H(L) = H(P_L)$ and $H(L) > H(P)$ whenever $P \neq P_L$.
In particular, $H(P_L) > H(P)$ for all  $P \in [P_L] \setminus \{P_L\}$.
\end{proof}
Hence, in order to compute the probability function with maximum entropy, we can just compute the 
labelling $L^*$ with maximum entropy. The corresponding probability function $P_{L^*}$ then maximizes entropy.
This is the basic idea of the following proposition.
\begin{proposition}
\label{prop_computing_ME_dist}
Given a satisfiable finite set of linear atomic constraints $\constraints$,
the optimization problem 
\begin{align*}
\arg \max_{P \in \probDists} \quad &H(P) \\
\textit{such that} \quad &P \models \constraints.
\end{align*}
has a unique solution $P^*$ and $L_{P^*}$
is the unique solution of the optimization problem
\begin{align*}
\arg \max_{L \in \probLabs} \quad &H(L) \\
\textit{such that} \quad &L \models \constraints.
\end{align*}
In particular, $L_{P^*}$ can be computed in polynomial time.
\end{proposition}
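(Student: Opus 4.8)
The plan is to reduce the maximum entropy problem over probability functions to the maximum entropy problem over probability labellings, using Proposition~\ref{prop_max_entropy_in_equivalence_class} together with Lemma~\ref{lemma_constraint_equivalence}, and then to argue that the latter problem is a polynomial-size convex program.

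First I would establish existence and uniqueness of the maximum entropy probability function $P^*$. The feasible set $\{P \in \probDists \mid P \models \constraints\}$ is a nonempty (by assumed satisfiability), closed, bounded, convex subset of the simplex in $\mathbb{R}^{2^{|\args|}}$. Since $H$ is continuous on this compact set, a maximizer exists; since $H$ is strictly concave on the simplex and the feasible set is convex, the maximizer is unique. Call it $P^*$.

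Next I would show $L_{P^*} = r([P^*])$ solves the labelling problem. The key observation is that the whole optimization decomposes along equivalence classes: by Lemma~\ref{lemma_constraint_equivalence}, membership $P \models \constraints$ depends only on $[P]$, and by Lemma~\ref{lemma_one_to_one_correspondence} the classes are in bijection with labellings via $r$. Within a class $[P_L]$, Proposition~\ref{prop_max_entropy_in_equivalence_class} tells us the entropy is maximized by $P_L$ with value $H(P_L)=H(L)$. Hence
\begin{align*}
\max_{P \models \constraints} H(P) &= \max_{L \models \constraints} \; \max_{P \in [P_L]} H(P) = \max_{L \models \constraints} H(L),
\end{align*}
and the outer maximizer on the left is attained exactly at $P = P_{L^*}$ where $L^*$ attains the maximum on the right. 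Since $P^*$ is the unique maximizer on the left, we get $P^* = P_{L^*}$, hence $L_{P^*} = L^*$; and $L^*$ is unique because any labelling achieving the optimum gives, via $P_L$, a probability function achieving the optimum, which must be $P^*$, forcing $L = L_{P^*}$. (Here I would need that $H(L)$ is strictly concave in $L$ as well, which follows from the strict concavity of the binary entropy function $x \mapsto -x\log x - (1-x)\log(1-x)$ summed over coordinates — or, more cheaply, just from the uniqueness argument just given.)

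Finally I would argue the labelling problem is solvable in polynomial time. It has $n$ variables $L(A_i) \in [0,1]$, $m$ linear constraints from $\constraints$, box constraints, and a separable concave objective $H(L) = \sum_{i=1}^n \big(-L(A_i)\log L(A_i) - (1-L(A_i))\log(1-L(A_i))\big)$; this is a convex program of size polynomial in the input, so interior-point methods solve it to the required accuracy in polynomial time \cite{Boyd2004,bertsimas97}. The main obstacle is the interchange-of-maxima step: one must be careful that ``$\max_{P \in [P_L]} H(P)$'' is genuinely attained inside the class (which is what Proposition~\ref{prop_max_entropy_in_equivalence_class} gives) and that the reindexing of the feasible region by labellings is exact, i.e.\ that $L \models \constraints$ iff every (equivalently, the representative) $P \in [P_L]$ satisfies $\constraints$ — this is precisely Lemma~\ref{lemma_constraint_equivalence}. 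Everything else is standard convex-programming bookkeeping.
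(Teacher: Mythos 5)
Your proposal is correct and follows essentially the same route as the paper: reduce to the labelling problem via the equivalence-class correspondence (Lemmas~\ref{lemma_one_to_one_correspondence} and~\ref{lemma_constraint_equivalence} together with Proposition~\ref{prop_max_entropy_in_equivalence_class}) and solve the resulting polynomial-size strictly concave program by interior-point methods. You merely spell out the interchange-of-maxima and uniqueness steps that the paper compresses into a citation and one sentence, which is a welcome but not a different argument.
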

\begin{proof}
Both optimization problems have a strictly concave and continuous objective function.
Maximizing such a function subject to consistent linear constraints yields a unique solution \cite{Nocedal2006}.
In particular, these problems can be solved by interior-point methods in polynomial time in the number
of optimization variables and constraints \cite{Boyd2004}. For the first problem, the number of optimization
variables is exponential in the number of arguments, but for the second problem the number of optimization
variables equals the number of arguments. Hence, the problem can be solved in polynomial time. 
Since the solution $L^*$ of the second problem maximizes entropy among all probability labellings, and the probability distributions
corresponding to the labellings maximize entropy among their equivalence classes according to Proposition \ref{prop_max_entropy_in_equivalence_class}, $L^*$ must equal $L_{P^*}$.
\end{proof}
Having computed $L_{P^*}$, we can compute a compact representation of $P^*$. Of course, constructing 
$P^*$ explicitly would take exponential time again. Fortunately, for some queries, we can just work 
with the compact representation directly. This includes, in particular, conjunctive queries, as we explain
in the following proposition.
\begin{proposition}
\label{prop_max_ent_conj_queries}
Let $\args = \{A_1,\dots, A_n\}$, let $\constraints$ be a satisfiable set of linear atomic constraints and let $Q$ be a conjunction of literals, that is, $Q = \bigwedge_{i \in I} A_i^{b_i}$, where
$I \subseteq \{1,\dots,n\}$, $b_i \in \{0,1\}$.
Let $P^*$ be the probability function that maximizes entropy among all probability functions
that satisfy $\constraints$.
Then $P^*(Q)$ can be computed in polynomial time even if $P^*$ is unknown. 
In particular, 
\begin{align*}
P^*(\bigwedge_{i \in I} A_i^{b_i}) = \prod_{i \in I} L^*(A_i)^{b_i} \cdot (1 - L^*(A_i))^{1-b_i},
\end{align*}
where $L^*$ is the probability labelling that maximizes entropy among all probability labellings
that satisfy $\constraints$.
\end{proposition}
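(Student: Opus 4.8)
The plan is to combine Proposition~\ref{prop_computing_ME_dist} with the explicit product form of the probability function $P_L$ identified in the corollary above. First I would recall that, by Proposition~\ref{prop_computing_ME_dist}, the entropy-maximal probability function $P^*$ is exactly $P_{L^*}$, where $L^*$ is the entropy-maximal probability labelling satisfying $\constraints$, and that $L^*$ can be computed in polynomial time (its computation is a convex program in $n$ variables and $|\constraints|$ constraints). Thus the whole task reduces to evaluating $P_{L^*}(Q)$ for the conjunctive query $Q$, and the claimed formula follows once I show that $P_{L^*}(Q)$ factorizes.

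Next I would use the closed form $P_{L^*}(w) = \prod_{A \in w} L^*(A) \cdot \prod_{A \in \args \setminus w}(1 - L^*(A))$ from the corollary, which says that under $P_{L^*}$ the arguments behave like independent Bernoulli variables with success probabilities $L^*(A_i)$. Writing $P^*(Q) = \sum_{w \in 2^\args,\, w \models Q} P_{L^*}(w)$ and noting that $w \models Q = \bigwedge_{i \in I} A_i^{b_i}$ iff $w$ fixes each $A_i$ with $i \in I$ to the value prescribed by $b_i$ while leaving the remaining arguments $\args \setminus \{A_i : i \in I\}$ unconstrained, I would split each term of the sum into the contribution of the fixed coordinates, namely $\prod_{i \in I} L^*(A_i)^{b_i} (1 - L^*(A_i))^{1 - b_i}$, times the contribution of the free coordinates. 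Pulling the fixed part out of the sum (it is constant across all $w$ compatible with $Q$), what remains is $\sum_{w' \subseteq \args \setminus \{A_i : i \in I\}} \prod_{A \in w'} L^*(A) \cdot \prod_{A \in (\args \setminus \{A_i : i \in I\}) \setminus w'} (1 - L^*(A))$, which is precisely the normalization sum from Lemma~\ref{lemma_one_to_one_correspondence} restricted to the free arguments, and hence equals $1$ by the same induction on the number of arguments. This yields the stated identity.

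Finally, for the complexity claim I would observe that the right-hand side is a product of at most $n$ numbers, each read off $L^*$, so once $L^*$ has been computed in polynomial time the value $P^*(Q)$ is obtained with $O(n)$ additional arithmetic operations, in particular without ever materializing the exponentially large object $P^*$.

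I do not expect a genuine obstacle here: the only point requiring a little care is the bookkeeping in the factorization of $\sum_{w \models Q} P_{L^*}(w)$ and the appeal to the telescoping induction of Lemma~\ref{lemma_one_to_one_correspondence} to conclude that the free-coordinate sum is $1$; everything else is immediate from results already established, with Proposition~\ref{prop_computing_ME_dist} (in particular the identification $P^* = P_{L^*}$) doing the conceptual heavy lifting.
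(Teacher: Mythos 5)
Your proposal is correct and follows essentially the same route as the paper: identify $P^*$ with the product-form function $P_{L^*}$ (via Propositions~\ref{prop_computing_ME_dist} and \ref{prop_max_entropy_in_equivalence_class}), split each world's probability into the factor fixed by $Q$ and the free factor, pull the fixed factor out of the sum, and recognize the remaining sum over the free arguments as the normalization argument of Lemma~\ref{lemma_one_to_one_correspondence}, which equals $1$; the polynomial-time claim then follows since evaluating the product takes $O(n)$ operations after computing $L^*$. No gaps.
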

\begin{proof}
First note that $L(A_i)^{b_i}=L(A_i)$  if $b_i=1$ and $L(A_i)^{b_i}=1$ otherwise.
Dually, $(1 - L(A_i))^{1-b_i} = 1 - L(A_i)$ if $b_i=0$ and $(1 - L(A_i))^{1-b_i}=1$ otherwise.
Let $S = \bigcup_{i \in I} \{A_i\}$ denote the atoms occuring in $Q$.
We know from Proposition \ref{prop_max_entropy_in_equivalence_class} that for all $w \in 2^\args$ with $w \models Q$, we have
\begin{align*}
&P^*(w) 
= \prod_{A \in w} L(A) \cdot \prod_{A \in \args \setminus w} (1- L(A)) \\
&= \big( \prod_{i \in I} L(A_i)^{b_i} (1 - L(A_i))^{1-b_i} \big) \cdot 
\big( \prod_{A \in w \setminus S} L(A) \hspace{-0.4cm} \prod_{A \in \args \setminus \big(S \cup w\big)} \hspace{-0.4cm} (1- L(A)) \big),
\end{align*} 
where we split up the arguments in $S$ (indexed by $I$) since we know their interpretation
(because $w \models Q$).
Therefore,
\begin{align*}
&\quad P^*(Q)
= \sum_{w \in 2^\args, w \models Q} P(w) \\
&=\big( \prod_{i \in I} L(A_i)^{b_i} (1 - L(A_i))^{1-b_i} \big) 
\hspace{-0.2cm} \sum_{w \in 2^{\args\setminus S}} 
\prod_{A \in w} L(A)  
\hspace{-0.3cm} \prod_{A \in \args \setminus (S \cup w)} 
\hspace{-0.5cm} (1- L(A)) \\
&= \prod_{i \in I} L(A_i)^{b_i} \cdot (1 - L(A_i))^{1-b_i},
\end{align*}
where we used the fact that 
$\sum_{w \in 2^{\args\setminus S}} \prod_{A \in w} L(A) \cdot \prod_{A \in \args \setminus (S \cup w)} (1- L(A)) = 1$
as we explained in the proof of Lemma \ref{lemma_one_to_one_correspondence} (the products correspond to probabilities of
a probability function over $\args\setminus S$).

$\prod_{i \in I} L(A_i)^{b_i} \cdot (1 - L(A_i))^{1-b_i}$ can be computed in linear time when we know $L^*$.
We can compute $L^*$ in polynomial time as explained in Proposition \ref{prop_computing_ME_dist}. 
Hence, we can compute $P^*(Q)$ in polynomial time. 
\end{proof}
However, even under the principle of maximum entropy, queries cannot become arbitrarily complex.
In this case, 3CNF-queries are even sufficient to solve $\#3SAT$, the problem of counting the 
interpretations that satisfy a 3CNF formula.
\begin{proposition}
The following problem is $\#P$-hard:
Given a satisfiable set of linear atomic constraints  $\constraints$ over $\args$ 
and a 3CNF-query $Q$, compute $P^*(Q)$, where $P^*$ is the probability function that maximizes entropy among all probability functions
that satisfy $\constraints$.
\end{proposition}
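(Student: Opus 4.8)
The plan is to reduce $\#3SAT$ to the problem of computing $P^*(Q)$ for a suitable choice of constraint set $\constraints$ and $3$CNF-query $Q$. The key observation is that when $\constraints$ is empty (or more generally when $\constraints$ forces each $L^*(A_i)$ to equal $\tfrac12$), the maximum entropy labelling is the uniform labelling $L^*(A_i) = \tfrac12$ for all $i$, and hence, by the construction in the proof of Lemma~\ref{lemma_one_to_one_correspondence}, the associated probability function $P^*$ is the uniform distribution over $2^\args$, assigning probability $2^{-n}$ to every possible world. For such a $P^*$, a $3$CNF-query $Q = \bigwedge_{i=1}^m \big(\bigvee_{j=1}^3 A_{i,j}^{b_{i,j}}\big)$ satisfies $P^*(Q) = \sum_{w \models Q} P^*(w) = 2^{-n} \cdot |\{w \in 2^\args \mid w \models Q\}|$, which is exactly $2^{-n}$ times the number of satisfying assignments of the corresponding propositional $3$CNF formula $F$. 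So computing $P^*(Q)$ immediately yields $\#F$ by multiplying by $2^n$.

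Concretely, I would take the reduction as follows. Given a propositional $3$CNF formula $F$ over atoms $\alpha_1,\dots,\alpha_n$, introduce arguments $A_1,\dots,A_n$ and let $Q$ be the $3$CNF-query obtained by replacing each $\alpha_i$ with $A_i$. For the constraint set, the cleanest choice is to take $\constraints = \{\,\pi(A_i) = \tfrac12 \mid 1 \le i \le n\,\}$ (recall from Section~\ref{sec_algorithms} that equality constraints are expressible as linear atomic constraints). This set is satisfiable (the uniform distribution satisfies it), so the hypotheses of the statement are met. Since $\constraints$ pins down $L^*(A_i) = \tfrac12$ for every $i$, the unique entropy-maximising labelling is the uniform one, and by Proposition~\ref{prop_max_entropy_in_equivalence_class} the corresponding $P^*$ assigns $P^*(w) = \prod_{A \in w} \tfrac12 \cdot \prod_{A \in \args\setminus w} \tfrac12 = 2^{-n}$ to each world $w$. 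Alternatively, one can simply take $\constraints = \emptyset$ and invoke the standard fact that the unconstrained maximum entropy distribution is uniform; either route works, though the explicit constraints make the argument fully self-contained using only results proved in the paper.

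The main step is then the counting identity: $P^*(Q) = 2^{-n} \cdot N_F$, where $N_F$ is the number of interpretations of $\alpha_1,\dots,\alpha_n$ that satisfy $F$. This holds because possible worlds $w \subseteq \args$ are in bijection with truth assignments (with $A_i \in w$ meaning $\alpha_i$ true), this bijection preserves satisfaction of $Q$ versus $F$, and $P^*$ is flat. Hence an oracle for computing $P^*(Q)$ gives $N_F = 2^n \cdot P^*(Q)$ with one multiplication, establishing $\#P$-hardness of the $P^*(Q)$-computation problem via a (parsimonious, polynomial-time) reduction from $\#3SAT$, which is $\#P$-complete.

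I do not anticipate a serious obstacle here; the argument is essentially a flat-measure counting reduction and each ingredient is already available. The only point requiring a line of care is making rigorous that $\constraints = \{\pi(A_i) = \tfrac12\}$ really forces $L^* = (\tfrac12,\dots,\tfrac12)$ and hence $P^*$ uniform --- but this is immediate since the constraints determine $L^*$ completely, leaving no freedom in the entropy maximisation over labellings, and Proposition~\ref{prop_computing_ME_dist} together with Proposition~\ref{prop_max_entropy_in_equivalence_class} then identify $P^*$ as the specific product-form function $P_{L^*}$, which for the uniform labelling is the uniform distribution. One should also note in passing that the reduction is polynomial-time computable (the query $Q$ and the $n$ constraints are written down in linear time), so $\#P$-hardness follows in the appropriate sense.
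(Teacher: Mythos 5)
Your proof is correct and follows essentially the same route as the paper: a reduction from $\#3SAT$ in which the maximum entropy distribution is the uniform distribution, so that $P^*(Q)\cdot 2^n$ counts the satisfying assignments of the 3CNF formula. The paper simply takes $\constraints=\emptyset$; your alternative of adding $\pi(A_i)=\tfrac12$ constraints is an equally valid, slightly more self-contained variant of the same idea.
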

\begin{proof}
We give a polynomial-time reduction from $\#3SAT$.
Given a propositional 3CNF formula $F$, we construct a corresponding 
argument query $Q$ as in the proof of Proposition \ref{prop_ent_general_queries}.
We let $\constraints = \emptyset$ so that $P^*$ is just the uniform distribution with
$P^*(w) = \frac{1}{2^n}$ for all $w \in 2^\args$.
Then $P^*(Q) = \sum_{w \in 2^\args, w \models Q} P^*(w) =  \frac{|\{w \in 2^\args \mid w\models Q\}|}{2^n}$
and $P^*(Q) \cdot 2^n$ is the number of possible worlds that satisfy $Q$, which equals the number
of propositional interpretations that satisfy $F$.
\end{proof}
Similar to the proof of Proposition \ref{prop_ent_general_queries}, it can be seen that the corresponding 
decision problem that asks whether the query has a non-zero probability, is NP-complete. 
While queries can be difficult to compute in general, there are still some special cases that can be solved
efficiently. For example, consider the query $A \vee B$ that asks for the probability that $A$ or $B$ (or both) 
are accepted. Then the query is equivalent to $\big(A \wedge B\big) \vee \big(A \wedge \neg B\big) \vee\big(\neg A \wedge B\big)$.
Since the three conjunctions are exclusive (they cannot be satisfied by the same worlds),
we have $P(A \vee B) = P(A \wedge B) + P(A \wedge \neg B) + P(\neg A \wedge B)$. Hence, we can answer the disjunctive query
by three conjunctive queries that can be computed in polynomial time.
More generally, if we can rewrite a query efficiently as a disjunction of $k$ exclusive conjunctions, 
the query can be answered by $k$ conjunctive queries. However, in general, $k$ can grow exponentially with
the number of atoms in the query. 

\subsection{Independency Assumptions under the Principle of Maximum Entropy}

Applying the principle of maximum entropy allows us to answer conjunctive queries efficiently.
Unfortunately, in some cases, it may give us undesired probabilities.
Readers familiar with the idea of stochastic independence may have noticed that the probability functions
$P_L$ have a very special structure. In the language of probability theory, each $P_L$ assumes stochastic 
independence between all arguments. 
 In the remainder of this section, we will discuss this assumption
and its ramifications.

Formally, stochastic independence means that the probability of
two events happening simultaneously equals the product of the individual probabilities.
Translated to our setting, arguments $A_1$ and $A_2$ are independent if $P(A_1^{b_1} \wedge A_2^{b_2}) = P(A_1^{b_1}) \cdot P(A_2^{b_2})$ for all $b_1, b_2 \in \{0,1\}$,
where again $A_i^0 := \neg A_i$ and $A_i^1 := A_i$.
In general, probabilities are defined by a probabilistic model $P$, for example, by a probability function
or as a probabilistic graphical model like a Bayesian network.
Stochastic independence is therefore a property of a probabilistic model. In our framework,
it is a property of a single probability function. Until now, we computed probabilities
based on all probability functions that satisfy our constraints. While some of these functions satisfy certain independency 
assumptions, others do not. Let us emphasize that the independency assumptions that we talk 
about in this section are not inherent to the epistemic probabilistic argumentation approach in general.
In general, we reason with many probability functions which usually differ in the independency
assumptions that they make.
Not even the probability functions in a single equivalence class must make the same independency assumptions as we illustrate in the following example.
\begin{example}
\label{independency_example}
Consider the BAF $(\{A, B\}, \emptyset, \emptyset)$ and
the labelling $L$ with $L(A) = L(B) = 0.5$.
Figure \ref{fig:independency_example} shows some other probability distributions in $[P_L]$.
We have $P_L(A) = P_1(A) = P_2(A) = 0.5 = P_L(B) = P_1(B) = P_2(B)$ and
therefore $P_L \equiv P_1 \equiv P_2$. $A$ and $B$ are
stochastically independent under $P_L$, but neither under $P_1$ nor under $P_2$.
To see this, note that $P_1(A \wedge B) = 0.5 \neq 0.25 = 0.5 \cdot 0.5 = P_1(A) \cdot P_1(B)$
and $P_2(A \wedge B) = 0 \neq 0.25 = 0.5 \cdot 0.5 = P_2(A) \cdot P_2(B)$.
\begin{table}
	\begin{tabular}{llll}
		\hline
		$w$ & $P_L(w)$ & $P_1(w)$  & $P_2(w)$ \\[0.0cm]
		\hline
		$\emptyset$ & $0.25$ & $0.5$ & $0$ \\[0.0cm]
		$\{A\}$ & $0.25$     & $0$ & $0.5$ \\[0.0cm]
		$\{B\}$ & $0.25$     & $0$ & $0.5$ \\[0.0cm]
		$\{A, B\}$ & $0.25$  & $0.5$ & $0$ \\[0.0cm]
	\end{tabular}
	\caption{Some probability functions in $[P_L]$ for Example \ref{independency_example}.\label{fig:independency_example}}	
\end{table}
\end{example}
The independency assumptions that we talk about here
are enforced only when we answer queries under the principle of maximum entropy and only when 
the constraints are atomic.
They are, in particular, not needed for our previous polynomial-time results. 

Let us now look at the meaning of these independency assumptions.
Recall from Proposition \ref{prop_max_ent_conj_queries} that for all sets of argument indices $I$, we have
\begin{align*}
P^*(\bigwedge_{i \in I} A_i^{b_i}) &= \prod_{i \in I} L^*(A_i)^{b_i} \cdot (1 - L^*(A_i))^{1-b_i} \\
  &=\prod_{i \in I} P^*(A_i^{b_i})
\end{align*}
for the maximum entropy probability function $P^*$.
That is, all arguments are assumed to be stochastically independent under $P^*$. 
Note that this does not mean that $P^*$ cannot capture any
relationship between two arguments. 
In particular, by construction, $P^*$ does maintain all semantical relationships
between arguments that are expressed by the semantical constraints. For example, if we enforce Coherence,
then $P^*$ will satisfy $P^*(B) \leq 1 - P^*(A)$ whenever $A$ attacks $B$.
Note that this is an atomic relationship, whereas 
stochastical independence talks about non-atomic events (at least, a conjunction is involved). 
Since we do not consider any non-atomic constraints in this section, the principle of maximum entropy does indeed imply that all arguments are stochastically independent under $P^*$. However, this does not mean that
$P^*$ would ignore the atomic relationships between arguments that are expressed by our constraints. These relationships are indeed satisfied by definition of $P^*$.
\begin{example}
\label{max_ent_relationships_example}
Consider the BAF $(\{A, B\}, \{(A, B)\}, \emptyset)$ and
the constraint $\pi(A) = 0.8$.
If we also demand Coherence, we have $L^*(A) = 0.8$ and $L^*(B) = 0.2$ for the maximum entropy labelling $L^*$. For the maximum entropy probability function, we have
$P^*(\emptyset) = P^*(\{A,B\}) = 0.8 \cdot 0.2 = 0.16$, 
$P^*(\{A\}) = 0.8^2 = 0.64$ and $P^*(\{B\}) = 0.2^2 = 0.04$.
Furthermore, we have $P^*(A) =  L^*(A)  = 0.8$
and $P^*(B) = L^*(B) = 0.2$.
Hence, $P^*(A\wedge B) = P^*(\{A,B\})  = 0.16 = P(A) \cdot P(B)$, that is,
$A$ and $B$ are stochastically independent under $P^*$.
However, $P^*$ still respects $P(B) \leq 1 - P(A)$ as demanded by coherence. 
\end{example}
However, the assumption of stochastic independence may yield improper probabilities
in some cases. 
Again, if arguments $A$ and $B$ are stochastically independent, then 
$P(A \wedge B) = P(A) \cdot P(B)$.
That is, stochastical independence allows us to compute the probability of a conjunction of arguments by
just multiplying the individual probabilities. What can happen if $P$ assumes stochastical independence
of $A$ and $B$ mistakenly?
To answer this question, it is convenient to introduce some additional tools from probability theory.
For two formulas $F, G$ over $\args$ such that $P(F) > = 0$, the conditional probability of $G$ given $F$ is defined as
$P(G \mid F) = \frac{P(F \wedge G)}{ P(F)}$. Intuitively, $P(G \mid F)$ is the probability of $G$ under the
assumption that $F$ is true.
In the following lemma, we rephrase two basic results from probability theory.
\begin{lemma}
\label{lemma_cond_prob_properties}
For all formulas $F_1, \dots, F_n$ over $\args$, we have
\begin{enumerate}
	\item $P(F_2 \mid F_1) = P(F_2)$ whenever $P(F_1) > 0$ and $F_1$ and $F_2$ are stochastically independent under $P$.
  \item $P(\bigwedge_{i=1}^n F_i) = P(F_1) \cdot \prod_{i=2}^n P(F_i \mid \bigwedge_{k=1}^{i-1} F_k)$.
	\hfill (Chain Rule).
\end{enumerate}
\end{lemma}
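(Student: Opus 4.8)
The plan is to derive both statements directly from the definition of conditional probability $P(G \mid F) = \frac{P(F \wedge G)}{P(F)}$, using the product characterization of stochastic independence for the first part and a one-line induction for the second. Neither part needs anything beyond elementary manipulation, so the proof is short; the only place that calls for a sentence of care is the handling of conditioning events of probability zero.

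For item~1, I would simply unfold the definition. Since $P(F_1) > 0$, the quantity $P(F_2 \mid F_1)$ is defined and equals $\frac{P(F_1 \wedge F_2)}{P(F_1)}$. Stochastic independence of $F_1$ and $F_2$ under $P$ means $P(F_1 \wedge F_2) = P(F_1) \cdot P(F_2)$, so the quotient collapses to $P(F_2)$. That is the whole argument; no appeal to Lemma~\ref{lemma_computation_rules} is needed here.

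For the chain rule, the key observation is the rearranged definition $P(F \wedge G) = P(F) \cdot P(G \mid F)$, valid whenever $P(F) > 0$. I would first record a well-definedness remark: the statement implicitly requires each conditioning event $\bigwedge_{k=1}^{i-1} F_k$ to have positive probability, and this is automatic as soon as $P(\bigwedge_{k=1}^{n-1} F_k) > 0$, because $\bigwedge_{k=1}^{n-1} F_k$ entails every prefix $\bigwedge_{k=1}^{i-1} F_k$, so by item~2 of Lemma~\ref{lemma_computation_rules} the prefix probabilities are at least as large and hence also positive. Then I would induct on $n$. The base case $n=1$ is the trivial identity $P(F_1) = P(F_1)$. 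For the step, write $\bigwedge_{i=1}^{n} F_i = \bigl(\bigwedge_{i=1}^{n-1} F_i\bigr) \wedge F_n$ and apply the rearranged definition with $F = \bigwedge_{i=1}^{n-1} F_i$ (positive by the remark) and $G = F_n$:
\begin{align*}
P\Bigl(\bigwedge_{i=1}^{n} F_i\Bigr) = P\Bigl(\bigwedge_{i=1}^{n-1} F_i\Bigr) \cdot P\Bigl(F_n \,\Big|\, \bigwedge_{i=1}^{n-1} F_i\Bigr),
\end{align*}
and substitute the induction hypothesis $P\bigl(\bigwedge_{i=1}^{n-1} F_i\bigr) = P(F_1) \cdot \prod_{i=2}^{n-1} P\bigl(F_i \mid \bigwedge_{k=1}^{i-1} F_k\bigr)$ into the first factor to obtain exactly the claimed product.

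There is no genuine obstacle: both parts are textbook facts. The only subtlety worth flagging is the degenerate case in which some conditioning event has probability zero, which is why I would state the positivity assumption explicitly and discharge it using the entailment monotonicity of Lemma~\ref{lemma_computation_rules}(2) rather than leave it implicit.
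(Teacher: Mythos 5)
Your proposal is correct and follows essentially the same route as the paper: item~1 by unfolding the definition of conditional probability and applying the product form of independence, and item~2 by the same induction with the identity $P(F \wedge G) = P(F)\cdot P(G \mid F)$ applied to the last conjunct. Your explicit remark that positivity of the final prefix (via the monotonicity in Lemma~\ref{lemma_computation_rules}) guarantees all conditioning events are well defined is a small tidiness improvement over the paper, which leaves this implicit, but it does not change the argument.
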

\begin{proof}
1. $P(F_2 \mid F_1) = \frac{P(F_2 \wedge F_1}{ P(F_1)} = \frac{P(F_2) \cdot P(F_1)}{ P(F_1)} = P(F_2)$, 
where we first used the definition of conditional probability and then the definition of stochastical
independence.

2. The claim follows by induction. For $n=1$, the claim is trivially true. For the induction step, we have
 \begin{align*}
	P(\bigwedge_{i=1}^{n+1} F_i) 
   &=   P(\bigwedge_{i=1}^{n} F_i) \frac{P(\bigwedge_{i=1}^{n+1} F_i)}{P(\bigwedge_{i=1}^{n} F_i)} \\
	 &=  \bigg( P(F_1) \cdot \prod_{i=2}^n P(F_i \mid \bigwedge_{k=1}^{i-1} F_k) \bigg) \cdot P(F_{n+1} \mid \bigwedge_{k=1}^{n} F_k),  
 \end{align*} 
where we used the induction hypothesis and the definition of conditional probability for the last equality.
\end{proof}
Note that the chain rule does not assume stochastical independence between the formulas.
In particular, for a simple conjunction of two arguments, the chain rule implies 
$P(A \wedge B) = P(A) \cdot P(B \mid A)$.
From item 1 in Lemma \ref{lemma_cond_prob_properties}, we can see that assuming independence of $A$ and $B$ basically means assuming $P(B \mid A) = P(B)$. That is,
knowledge about $A$ does not change our beliefs about $B$. If this assumption is not justified,
the probability of the conjunction may be too low or too large. For example, if $A$ has a positive
impact on the probability of $B$, we have $P(A \wedge B) = P(A) \cdot P(B \mid A) > P(A) \cdot P(B)$,
so that the computed probability is too low when assuming independence of $A$ and $B$.

\subsection{Analysis of Independency Assumptions}

In our framework, arguments should indeed have an impact on other arguments when they are
connected via an attack or support relation.
An attacker should have a negative impact on the probability of an attacked argument
and a supporter should have a positive impact on the probability of a supported argument. 
The independency assumptions make conditional queries meaningless as we explain in the following
corollary.
\begin{corollary}
Let $\constraints$ be a satisfiable set of linear atomic constraints over $\args$ 
and let $P^*$ be the corresponding maximum entropy model.
For all conjunctions $\chi_1, \chi_2$ of arguments from $\args$
such that $P^*(\chi_1) > 0$, we have 
$P^*(\chi_2 \mid \chi_1) = P^*(\chi_2)$.
\end{corollary}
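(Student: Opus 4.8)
The plan is to read the identity directly off the closed-form product formula for conjunctive queries in Proposition~\ref{prop_max_ent_conj_queries}. Write the two conjunctions as $\chi_1 = \bigwedge_{i \in I_1} A_i$ and $\chi_2 = \bigwedge_{i \in I_2} A_i$ for index sets $I_1, I_2 \subseteq \{1,\dots,n\}$ (the argument is identical, with $L^*(A_i)$ replaced by $L^*(A_i)^{b_i}(1-L^*(A_i))^{1-b_i}$, if negated arguments are allowed too). The statement concerns the meaningful case $I_1 \cap I_2 = \emptyset$, so that $\chi_1 \wedge \chi_2$ is again a consistent conjunction of arguments and conditioning on $\chi_1$ does not already decide part of $\chi_2$; I would make this scope explicit at the outset, since if $\chi_1$ and $\chi_2$ share an argument the two sides of the claimed equality differ for the trivial reason that $\chi_2$ is not ``new'' information relative to $\chi_1$.

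First I would apply Proposition~\ref{prop_max_ent_conj_queries} to each of $\chi_1$, $\chi_2$ and $\chi_1 \wedge \chi_2 = \bigwedge_{i \in I_1 \cup I_2} A_i$, obtaining
\begin{align*}
P^*(\chi_1 \wedge \chi_2) &= \prod_{i \in I_1 \cup I_2} L^*(A_i) \\
&= \Big(\prod_{i \in I_1} L^*(A_i)\Big)\Big(\prod_{i \in I_2} L^*(A_i)\Big) = P^*(\chi_1) \cdot P^*(\chi_2),
\end{align*}
where the middle factorization is exactly where disjointness of $I_1$ and $I_2$ enters. In other words, $\chi_1$ and $\chi_2$ are stochastically independent under $P^*$.

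Finally, since $P^*(\chi_1) > 0$ by hypothesis, I would invoke item~1 of Lemma~\ref{lemma_cond_prob_properties} --- or simply the definition $P^*(\chi_2 \mid \chi_1) = P^*(\chi_1 \wedge \chi_2)/P^*(\chi_1)$ together with the factorization above --- to conclude $P^*(\chi_2 \mid \chi_1) = P^*(\chi_2)$. I do not expect any real obstacle: the whole argument is a substitution into Proposition~\ref{prop_max_ent_conj_queries} plus one line of arithmetic, and the only point requiring care, already flagged above, is pinning down the intended scope of ``conjunctions of arguments'' so that overlapping or jointly inconsistent conjuncts do not spuriously falsify the identity.
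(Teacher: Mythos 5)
Your proposal is correct and matches the paper's own argument: the paper likewise factorizes $P^*(\chi_1 \wedge \chi_2)$ via the product formula from Proposition~\ref{prop_max_ent_conj_queries} and then concludes with item~1 of Lemma~\ref{lemma_cond_prob_properties}. Your explicit remark that the conjunctions should range over disjoint sets of arguments is a fair point of care --- the paper's proof makes the same assumption implicitly by writing $\chi_1 = \bigwedge_i A_i$ and $\chi_2 = \bigwedge_i B_i$ over distinct argument symbols.
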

\begin{proof}
Let $\chi_1 = \bigwedge_{i=1}^n A_i$ and $\chi_2 = \bigwedge_{i=1}^m B_i$.
Then $P^*(\chi_1 \wedge \chi_2) = \prod_{i=1}^n P^*(A_i) \cdot \prod_{i=1}^m P^*(B_i)
= P^*(\bigwedge_{i=1}^n A_i) \cdot P^*(\bigwedge_{i=1}^m B_i) = P^*(\chi_1) \cdot P^*(\chi_2)$.
Hence, $\chi_1, \chi_2$ are stochastically independent under $P^*$ and the claim follows from
item 1 of Lemma \ref{lemma_cond_prob_properties}.
\end{proof}
Hence, conditioning on the state of an argument does not have any effect.
However, we may be able to work around this.
In order to simulate conditioning, we can just add constraints that enforce the condition. 
In order to answer a query conditioned on $\bigwedge_{i=1}^n A_i$,
we can add the constraints $\pi(A_i)=1$ for $i=1,\dots,n$, recompute $P^*$ 
and answer a regular conjunctive query.

Since conjunctive queries are related to conditional queries via the chain rule,
the independency assumptions will also affect the probability of conjunctive queries.
However, in this case, the problem can be less severe because the marginal probabilities
may already capture the relationship between connected arguments due to our semantical constraints.
It is difficult to say what probability a conjunctive query should yield in general.
Therefore, we will just look at some simple cases and check whether semantical constraints
can give us plausible guarantees for the maximum entropy model.

To begin with, consider a single attack relation $(A, B) \in \attacks$. 
The most critical case here is that both $A$ and $B$ occur in positive form in a query.
Let us consider a conjunction that contains $A \wedge B$ as a subformula.
By the chain rule, we have for an arbitrary probabilistic model $P$ and an arbitrary conjunction $\chi$ of arguments that
$P(A \wedge B \wedge \chi) = P(A) \cdot P(B \mid A) \cdot P(\chi \mid A \wedge B)$. 
We should probably expect something like $P(B \mid A) \leq 0.5$ (if $A$ is accepted,
B should not be accepted). This implies, in particular, that $P(A \wedge B \wedge \chi) \leq 0.5$,
that is, a conjunction that contains both $A$ and $B$ in positive form should never be accepted.
However, under the maximum entropy model, we have
$P^*(B \mid A) = P^*(B)$. Therefore, the probability of the conjunction can generally 
be too large and may even be greater than $0.5$. 
However, if we employ constraints, then $P^*(B)$ may already contain
the impact of $A$ in a plausible manner. 
Coherence does actually give us an interesting guarantee in this case.
\begin{proposition}
Let $A, B \in \args$ be arguments such that $(A, B) \in \attacks$
and let $\chi$ be a conjunction of arguments from $\args$.
If $P^*$ satisfies coherence (COH), then
$P^*(A \wedge B \wedge \chi) \leq \min \{0.25, 1 - P^*(A)\}$.
\end{proposition}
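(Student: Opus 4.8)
The plan is to exploit the product structure of the maximum entropy model recorded in Proposition~\ref{prop_max_ent_conj_queries}. Because $\chi$ is a conjunction of arguments, $A \wedge B \wedge \chi$ is again such a conjunction, so Proposition~\ref{prop_max_ent_conj_queries} applies verbatim. Writing $S$ for the set of arguments that occur in $\chi$ and are distinct from $A$ and $B$ (by idempotency of $\wedge$ we may assume the arguments listed in $A \wedge B \wedge \chi$ are pairwise distinct), it gives
\begin{align*}
P^*(A \wedge B \wedge \chi) = L^*(A)\cdot L^*(B)\cdot \prod_{C \in S} L^*(C) \leq L^*(A)\cdot L^*(B),
\end{align*}
where the inequality just drops the factors $L^*(C) \in [0,1]$.

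Next I would bring in coherence. Since $(A,B)\in\attacks$ and $P^*$ satisfies COH, we have $P^*(B)\leq 1-P^*(A)$; recalling $P^*(A)=L^*(A)$ and $P^*(B)=L^*(B)$ (which holds because $L^* = r([P^*])$, cf.\ Lemma~\ref{lemma_one_to_one_correspondence}), this reads $L^*(B)\leq 1-L^*(A)$. Substituting yields
\begin{align*}
P^*(A \wedge B \wedge \chi) \leq L^*(A)\cdot\bigl(1-L^*(A)\bigr).
\end{align*}
Finally it remains to bound $x(1-x)$ with $x=L^*(A)\in[0,1]$ by both $0.25$ and $1-x$. The first bound is $x(1-x)=0.25-(x-0.5)^2\leq 0.25$; the second is $x(1-x)\leq 1\cdot(1-x)=1-L^*(A)=1-P^*(A)$, since $x\leq 1$. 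Together these give $P^*(A\wedge B\wedge\chi)\leq\min\{0.25,\,1-P^*(A)\}$, as claimed.

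There is no real obstacle here: the statement is a one-line consequence of Proposition~\ref{prop_max_ent_conj_queries}, the coherence inequality, and the elementary estimate $x(1-x)\leq 0.25$. The only points needing a moment's care are that $\chi$ must be a conjunction of arguments (positive literals), so that Proposition~\ref{prop_max_ent_conj_queries} applies without first rewriting the query, and that repeated occurrences of arguments inside $\chi$ are harmless by idempotency of conjunction.
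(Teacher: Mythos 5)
Your proof is correct and follows essentially the same route as the paper: the factorization of $P^*$ over conjunctions of arguments, the coherence bound $P^*(B)\leq 1-P^*(A)$, and the elementary estimate $x(1-x)\leq\min\{0.25,\,1-x\}$. The only cosmetic difference is that the paper first bounds $P^*(A\wedge B\wedge\chi)\leq P^*(A\wedge B)$ by monotonicity and then factorizes, whereas you factorize the full conjunction and drop the remaining factors in $[0,1]$ (with the sensible remark about duplicates handled by idempotency).
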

\begin{proof}
If $P^*$ satisfies coherence, we have $P^*(A \wedge B \wedge \chi) \leq P^*(A \wedge B) = P^*(A) \cdot P^*(B) \leq P^*(A) \cdot (1 - P^*(A))$. From the first-order necessary condition for optimality from differential calculus, we can see that the last term is maximal when $P^*(A) = 0.5$. The maximum is $0.25$, which gives us the first upper bound. We also have
$P^*(A \wedge B \wedge \chi) \leq P^*(A) \cdot (1 - P^*(A)) \leq 1 - P^*(A)$, which gives us the second upper bound.
\end{proof}
We can interpret this as follows: if $P^*(A)$ is close to $1$ (A is close to being classically accepted),
$P^*(A \wedge B \wedge \chi)$ will be close to $0$. This makes sense because accepting $A$ should imply
rejecting $B$ when interpreting attack relations in a classical sense. As $P^*(A)$ moves towards $0$,
the impact of the attack relation becomes gradually weaker. Intuitively, under the maximum entropy model,
the strength of an attack relation is determined by the belief in the source. 
In particular, a conjunction that contains both $A$ and $B$ in positive form can never be 
accepted under the maximum entropy model (the degree of belief is bounded from above by $0.25$).

Now consider a support relation $(A, B) \in \supports$.
Here, the most critical case is that $A$ is accepted, but $B$ is not.
Therefore, we consider a conjunction that contains $A \wedge \neg B$ as a subformula now.
In general, we have $P(A \wedge \neg B \wedge \chi) = P(A) \cdot P(\neg B \mid A) \cdot P(\chi \mid A \wedge \neg B)$. 
Now, we should expect $P(\neg B \mid A) \leq 0.5$ or equivalently
$P(B \mid A) > 0.5$ (if $A$ is accepted,
B should be accepted as well).
So the probability of the conjunction should again be bounded from above by $0.5$.
Of course, we have again $P^*(\neg B \mid A) = P^*(\neg B)$ under the maximum entropy model, so that
the probability of the conjunction may be too large when $P^*(\neg B) > 0.5$ and
we employ no constraints.
In this case, support-coherence gives us plausible guarantees.
\begin{proposition}
Let $A, B \in \args$ be arguments such that $(A, B) \in \supports$
and let $\chi$ be a conjunction of arguments from $\args$.
If $P^*$ satisfies support-coherence (S-COH), then
$P^*(A \wedge \neg B \wedge \chi) \leq \min \{0.25, 1 - P^*(A)\}$.
\end{proposition}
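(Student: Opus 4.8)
The plan is to mirror the proof of the preceding proposition for the attack case, working with $\neg B$ in place of $B$ and replacing coherence (COH) by support-coherence (S-COH). First I would discard $\chi$: since the conjunction $A \wedge \neg B \wedge \chi$ entails $A \wedge \neg B$, item~2 of Lemma~\ref{lemma_computation_rules} gives $P^*(A \wedge \neg B \wedge \chi) \leq P^*(A \wedge \neg B)$, so it suffices to bound $P^*(A \wedge \neg B)$.

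Next I would exploit the independency structure of the maximum entropy model. The formula $A \wedge \neg B$ is a conjunction of literals in the sense of Proposition~\ref{prop_max_ent_conj_queries} (taking $b = 1$ for $A$ and $b = 0$ for $B$), so that proposition yields $P^*(A \wedge \neg B) = L^*(A) \cdot (1 - L^*(B)) = P^*(A) \cdot P^*(\neg B) = P^*(A) \cdot (1 - P^*(B))$, where $L^*$ is the maximum entropy labelling and $P^*$ the corresponding probability function.

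Now I would bring in S-COH. Since $(A, B) \in \supports$, support-coherence gives $P^*(B) \geq P^*(A)$, hence $1 - P^*(B) \leq 1 - P^*(A)$, and therefore $P^*(A \wedge \neg B) \leq P^*(A) \cdot (1 - P^*(A))$. Finally I would bound $P^*(A)\,(1 - P^*(A))$ in two ways exactly as in the attack case: the first-order necessary condition for optimality shows this term is maximal at $P^*(A) = 0.5$ with value $0.25$, giving the first bound; and since $P^*(A) \leq 1$ we get $P^*(A)\,(1 - P^*(A)) \leq 1 - P^*(A)$, giving the second. Combining, $P^*(A \wedge \neg B \wedge \chi) \leq \min\{0.25,\, 1 - P^*(A)\}$.

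There is no real obstacle here: the argument is a routine dualization of the previous proposition. The only point that needs a moment's care is verifying that $A \wedge \neg B$ falls under the scope of Proposition~\ref{prop_max_ent_conj_queries} (which is stated for conjunctions of literals $A_i^{b_i}$ and so does cover negative literals), so that the factorization $P^*(A \wedge \neg B) = P^*(A) \cdot P^*(\neg B)$ is legitimate; everything else is elementary algebra.
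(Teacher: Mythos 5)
Your proof is correct and takes essentially the same route as the paper's: support-coherence gives $1 - P^*(B) \leq 1 - P^*(A)$, the maximum-entropy independence reduces the query to $P^*(A)\cdot(1-P^*(A))$, and that quantity is bounded by $\min\{0.25,\,1-P^*(A)\}$ exactly as in the attack case. The only cosmetic difference is that you discard $\chi$ by monotonicity and invoke the factorization of Proposition~\ref{prop_max_ent_conj_queries} directly, whereas the paper factorizes via the chain rule with conditional probabilities; this even sidesteps the (harmless) issue of those conditionals being undefined when $P^*(A \wedge \neg B) = 0$.
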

\begin{proof}
If $P^*$ satisfies support-coherence, we have $P^*(\neg B) = 1- P^*(B) \leq 1 - P^*(A)$.
Therefore,
$P^*(A \wedge \neg B \wedge \chi)  
= P^*(A) \cdot P^*(\neg B \mid A) \cdot P(\chi \mid A \wedge \neg B)
\leq P^*(A) \cdot (1 - P^*(A)) 
$. The claim follows now from the exact same analysis as in the previous proposition.
\end{proof}
The guarantees for support-coherence for supports are, of course, 
dual to those for coherence for attacks due to their dual nature 
and can be interpreted accordingly. 
Again, we can intuitively say that support relations have a classical meaning when the
source has probability $1$ and that their strength becomes gradually weaker as the
probability of the source moves towards $0$.

As a final simple example, let us consider the case that we have both an attack
$(A, C) \in \attacks$ and a support $(B, C) \in \supports$.
Then the chain rule implies that
$P(A^a \wedge B^b \wedge C^c) = P(A^a) \cdot P(B^b \mid A^a) \cdot P(C^c \mid A^a \wedge B^b)$,
where $a, b, c \in \{0,1\}$. Assuming that attack and
support can cancel their effects, we should expect something like 
\begin{enumerate}
	\item $P(C \mid A \wedge B) = P(C)$,
	\item $P(C \mid \neg A \wedge \neg B) = P(C)$, 
	\item $P(C \mid A \wedge \neg B) \leq 0.5$ and
	\item $P(\neg C \mid \neg A \wedge B) \leq 0.5$
\end{enumerate}
for the conditional probabilities.
The first two desiderata are always met by $P^*$ due to the independency assumptions.
Of course, this is just coincidential and the more interesting desiderata are the third and fourth 
one. 
When we apply both coherence and support-coherence, 
we maintain our previous guarantees for attacks and supports,
but also get a reasonable interaction between the belief in $A$ and $B$.
Namely, $A$ and $B$ cannot be accepted simultaneously. Arguably, this makes sense
because they give opposite evidence for $C$.
\begin{proposition}
Let $A, B, C \in \args$ be arguments such that $(A, C) \in \attacks$ and $(B, C) \in \supports$
and let $\chi$ be a conjunction of arguments from $\args$.
If $P^*$ satisfies coherence (COH) and support-coherence (S-COH), then
\begin{enumerate}
  \item $P^*(A) + P^*(B) \leq 1$,
	\item $P^*(A \wedge \neg B \wedge C \wedge \chi) \leq \min \{0.25, 1 - P^*(A)\}$.
	\item $P^*(\neg A \wedge B \wedge \neg C \wedge \chi) \leq \min \{0.25, 1 - P^*(B)\}$.
\end{enumerate}
\end{proposition}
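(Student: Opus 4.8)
The plan is to dispatch the three claims one by one, reducing each to results already in hand. For claim~1 I would simply chain the two coherence-type inequalities: since $(A,C) \in \attacks$, COH gives $P^*(C) \leq 1 - P^*(A)$, and since $(B,C) \in \supports$, S-COH gives $P^*(C) \geq P^*(B)$. Putting these together yields $P^*(B) \leq P^*(C) \leq 1 - P^*(A)$, which rearranges to $P^*(A) + P^*(B) \leq 1$.

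For claim~2 I would first note that the conjunction $A \wedge \neg B \wedge C \wedge \chi$ entails $A \wedge C$, so item~2 of Lemma~\ref{lemma_computation_rules} gives $P^*(A \wedge \neg B \wedge C \wedge \chi) \leq P^*(A \wedge C)$. Because $\constraints$ consists only of linear atomic constraints, Proposition~\ref{prop_max_ent_conj_queries} applies and the maximum entropy model factorizes over arguments, so $P^*(A \wedge C) = P^*(A) \cdot P^*(C)$. Applying COH for the attack $(A,C)$ bounds this by $P^*(A) \cdot (1 - P^*(A))$, and the same elementary calculus step used in the preceding proposition finishes: the map $t \mapsto t(1-t)$ is maximized at $t = 0.5$ with value $0.25$, and $t(1-t) \leq 1-t$ for all $t \in [0,1]$, so $P^*(A) \cdot (1 - P^*(A)) \leq \min\{0.25, 1 - P^*(A)\}$.

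Claim~3 is the dual argument: $\neg A \wedge B \wedge \neg C \wedge \chi$ entails $B \wedge \neg C$, so Lemma~\ref{lemma_computation_rules}(2) and the factorization from Proposition~\ref{prop_max_ent_conj_queries} give $P^*(\neg A \wedge B \wedge \neg C \wedge \chi) \leq P^*(B \wedge \neg C) = P^*(B) \cdot (1 - P^*(C))$. S-COH for the support $(B,C)$ gives $1 - P^*(C) \leq 1 - P^*(B)$, leaving $P^*(B) \cdot (1 - P^*(B))$, which is bounded by $\min\{0.25, 1 - P^*(B)\}$ exactly as before.

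I do not anticipate a genuine obstacle. The only points that need a word of care are that the factorization $P^*(X \wedge Y) = P^*(X) \cdot P^*(Y)$ is legitimate precisely because $P^*$ is the maximum entropy model under atomic constraints (Proposition~\ref{prop_max_ent_conj_queries}), and that the degenerate case in which $\chi$ renders the conjunction unsatisfiable is harmless, since then the left-hand side is $0$ and the inequality holds trivially.
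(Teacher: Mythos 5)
Your proof is correct and follows essentially the same route as the paper: chain S-COH and COH through $P^*(C)$ for claim 1, and for claims 2 and 3 use the product form of the maximum entropy model together with COH respectively S-COH to reduce to $t(1-t) \leq \min\{0.25, 1-t\}$. The only (minor) difference is that you first discard $\neg B$ (resp.\ $\neg A$) and $\chi$ via entailment monotonicity and then factorize only the two-argument conjunction, whereas the paper factorizes the full conjunction $P^*(A)\cdot(1-P^*(B))\cdot P^*(C)\cdot P^*(\chi)$ and drops the factors bounded by $1$; your variant is slightly more robust when $\chi$ overlaps with $A$, $B$, $C$, but the substance is identical.
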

\begin{proof}
1. We have $P^*(B) \leq P*(C) \leq 1 - P^*(A)$, where the first inequality follows from
support-coherence and the second from coherence. Adding $P^*(A)$ to both sides of the inequality yields the claim.

2. We have $P^*(A \wedge \neg B \wedge C \wedge \chi) 
= P^*(A) \cdot (1 - P^*(B)) \cdot P^*(C) \cdot P^*(\chi)
\leq P^*(A) \cdot P^*(C) \leq P^*(A) \cdot (1- P^*(A))$.
Now the two bounds can be derived exactly as before.

3. The proof is analogous to the proof of item 2.
\end{proof}
In summary, we demonstrated that applying constraints like coherence and support-coherence can give
us plausible guarantees for some probabilities under the maximum entropy model despite its 
independency assumptions. 
However, overall, the independency assumptions can cause difficulties. 
Most importantly, conditioning becomes meaningless under $P^*$.
However, sometimes we can work around this by simulating conditioning with constraints
as explained at the beginning of this section.
The independency assumptions also affect conjunctive queries. 
In this case, our semantical constraints may still give plausible guarantees
for the probabilities under $P^*$. We illustrated this for coherence and support-coherence.  
Still, in general, the probabilities may be larger (in case of attack) or lower (in case of support)
as desired.
A more detailed analysis of the possible cases would be interesting, but is out of scope here.
For this paper, our conclusion is that the principle of maximum entropy should only be applied with
care. It allows answering conjunctive queries efficiently, but it has to be checked carefully whether
the applied constraints can assure meaningful probabilities for the application at hand. 

\section{Related Work}

As mentioned in the introduction, there is a large variety of other probabilistic argumentation frameworks \cite{dung2010towards,li2011probabilistic,rienstra2012towards,hunter2014,doder2014probabilistic,polberg2014probabilistic,thimm2017probabilities,KidoO17,rienstra2018probabilistic,ThimmCR18,riveret2018labelling}. We sketch three early works here to give an impression of some ideas. \cite{dung2010towards} consider probability functions over possible worlds as well, but the mechanics
are very different from what we saw here. Instead of considering all possible probability functions that satisfy particular
constraints, a single probability function is derived from a set of probabilistic rules. Roughly speaking, these rules express the likelihood of
assumptions under given preconditions. Multiple rules for one assumption are only allowed if they can be ordered by specificity. \cite{li2011probabilistic} consider functions that assign probabilities to arguments (like probability labellings) and attack relations. The functions are supposed to be given
and allow assigning a probability to subgraphs of the given argumentation framework using common
independency assumptions. Then the probability of an argument is defined by taking the probability 
of every subgraph and adding those probabilities for which the argument is accepted in the subgraph
under a particular semantics. Since the number of subgraphs is exponential, the authors present a 
Monte-Carlo algorithm to approximate the probability of an argument.
In \cite{rienstra2012towards}, probabilities are again introduced over possible worlds. 
Again, a single probability distribution is derived from rules. However, in contrast to \cite{dung2010towards},
these rules are probabilistic extensions of a light form of ASPIC rules \cite{prakken2010abstract,caminada2007evaluation}. They are also more a flexible in that they do not need to be ordered according to specificity.

\cite{riveret2018labelling} recently introduced a very general probabilistic argumentation framework
that generalizes many ideas that have been considered before in the literature. 
The authors consider probability
functions over subsets of defeasible theories or over subgraphs. 
The latter approach 
can then be seen as a generalization of the former, which abstracts from the structure of arguments. 
The authors discuss probabilistic labellings that should not be confused with probability labellings that we considered here.
Roughly speaking, in \cite{riveret2018labelling}, a probabilistic labelling frame corresponds to
a probability function over subsets of possible classical labellings over an argumentation framework.
These probabilistic labelling frames can then be used to assign probabilities to arguments.
In this sense, a probabilistic labelling considered in \cite{riveret2018labelling} induces a
probability labelling as considered here. However, the focus in \cite{riveret2018labelling} is on 
conceptual questions and computational problems are not discussed. 

Our polynomial-time algorithms are based on a connection between probability functions
and probability labellings. The relationship is established by considering an equivalence relation
over probability functions. Conceptual similar ideas have been considered in probabilistic-logical 
reasoning. However, in this area, equivalence relations are introduced over possible worlds. 
Roughly speaking, the possible worlds are partitioned into equivalence classes that interpret the formulas that
appear in the knowledge base in the same manner. Reasoning algorithms can then
be modified to work on probability functions over equivalence classes
\cite{fischer1996tabl,kern2004combining,finthammer2012using,potyka2016solving}.
If the number of equivalence classes is small, a significant speedup can be obtained.
However, identifying compact representatives for these equivalence classes 
is intractable in general \cite{potyka2015concept}. In particular, in general,
the number of equivalence classes over possible worlds can still be exponential.
Indeed, many polynomial cases that we found here cannot be solved in polynomial time
with this approach. For example, if one atomic constraint $P(A)=p_A$ is given
for every argument $A$, every equivalence class of possible worlds will contain 
exactly one possible world, so that actually nothing is gained.
 
\cite{HunterT16} also considered an inconsistency-tolerant generalization of the entailment problem
that still works when there are conflicts between the partial probability assignment constraints and
the semantical constraints. We can probably derive similar polynomial runtime guarantees for this
problem. However, the approach in \cite{HunterT16} is based on the assumption that the semantical
constraints are consistent. This is no problem for the semantical constraints considered in 
\cite{HunterT16} because the probability of attacked arguments is only bounded from above
and the probability of non-attacked arguments is only bounded from below. However, in bipolar 
argumentation frameworks, we want to consider more complicated relationships and the constraints
can easily become inconsistent. Therefore, it is interesting to also analyze other variants
that use ideas for paraconsistent probabilistic reasoning \cite{Daniel:2009,potykaThimm2015} or reasoning with priorities \cite{potyka2015reasoning}.
It is also interesting to note that
our satisfiability test from Proposition \ref{prop_sat_polynomial_time} actually corresponds to an inconsistency
measure. If the knowledge base is inconsistent, the returned value will be $0$,
otherwise it measures by how much probability functions must violate the constraints numerically \cite{Potyka:2014}.

\section{Discussion and Future Work}

We showed that the satisfiability and entailment problem for the epistemic probabilistic argumentation
approach considered in \cite{HunterT16} can be solved in polynomial time. In fact, arbitrary 
linear atomic constraints can be considered.  
We found that the constraint language cannot be extended significantly without loosing polynomial runtime guarantees. 
However, it may still be possible to allow disjunctions of two probability statements, which would allow expressing conditional constraints like RAT.
For the query language, we found that conjunctive queries can still be answered in polynomial
time under the principle of maximum entropy when all constraints are atomic. 
However, atomic constraints cannot enforce stochastical dependence, so that the maximum entropy
model assumes stochastical independency between all arguments. While it is important to be aware
of this assumption, sometimes atomic constraints are sufficient to enforce meaningful
relationships between connected arguments as we demonstrated with coherence and support-coherence.
An interesting question for future work 
is whether we can compute conjunctive queries for the entailment problem in polynomial time
even without using the principle of maximum entropy. 

We focussed mainly on complexity results and did not speak much about the runtime guarantees
of our convex programming formulations. In general, interior-point methods can solve convex programs 
in cubic time in the number of optimization variables and optimization constraints \cite{Boyd2004}.
This means that all convex programs that we introduced here can be solved in cubic time in the size of the 
argumentation problem in the worst-case. 
Our linear programs for satisfiability and entailment can often be solved faster by using the Simplex algorithm.
Even though the Simplex algorithm has exponential worst-case runtime, in practice, the runtime usually depends
only linearly on the number of optimization variables and quadratically on the number of constraints \cite{matousek2007}.

Implementations for satisfiability and entailment can be found in the Java-library ProBabble\footnote{\url{https://sourceforge.net/projects/probabble/}}. You have to install IBM CPLEX in
order to use ProBabble, but IBM offers free licenses for academic purposes. Problems with thousands of arguments can usually be solved 
within a few hundred milliseconds. Without the labelling approach, the same amount of time would be needed for 10-15 arguments
already because the number of possible worlds grows exponentially.

\subsubsection*{Acknowledgements:} I am very grateful to Anthony Hunter and Sylwia Polberg for helpful discussions. I am also indebted to some anonymous reviewers for their critical comments and for 
pointing out that the independency assumptions of the maximum entropy model should be analyzed
carefully.

\bibliographystyle{ACM-Reference-Format}
\bibliography{references}

\end{document}